\documentclass[letterpaper, 10 pt, conference]{ieeeconf}  
\IEEEoverridecommandlockouts                              % This command is only needed if 
\usepackage{amsmath,amsfonts,amsthm,amssymb}  % math
\usepackage[font=footnotesize]{caption}
\usepackage[caption=false, font=footnotesize]{subfig}
\usepackage{algorithmic}      % algorithms
\usepackage{graphicx}         % figures (PNG/JPG/PDF)
\usepackage{svg}              % SVG graphics
\usepackage{xcolor}           % coloured text / links
\let\labelindent\relax
\usepackage{enumitem}   % in the preamble
\usepackage{tikz}       % for drawing figures
\usepackage{soul} % required for \hl command
\usepackage{wrapfig}
\usepackage{multirow}
\usepackage{cite}             % bibliography handling
\usepackage{multicol}
\usepackage[bookmarks=true]{hyperref}

\allowdisplaybreaks

\usepackage{multirow}
\usepackage{xspace}
\usepackage{cleveref}
\usepackage{bm}

\usepackage{etoolbox}
\newtoggle{arxiv}
\toggletrue{arxiv}

\newtheorem{theorem}{Theorem}
\newtheorem{claim}{Claim}

\graphicspath{{Model_Control_Figs/}{Simulation_Results/}{Simulation_Results/Allocation_results/}}

\usepackage{graphicx}
\usepackage{subcaption}
\usepackage{float}

\newcommand{\ignore}[1]{}

\def\T{\mathcal{T}}

\def\dR{\mathbb{R}}

\def\epsilon{\varepsilon}

\newif\ifshowcomments
\showcommentstrue          % <-- set to \showcommentsfalse to hide all comments

\ifshowcomments
  \newcommand{\kiril}[1]{\textcolor{red}{(\textbf{Kiril:} #1)}}
  \newcommand{\adir}[1]{\textcolor{orange}{(\textbf{Adir:} #1)}}

  \newcommand{\snir}[1]{\textcolor{blue}{(\textbf{Snir:} #1)}}
  \newcommand{\todo}[1]{\textcolor{cyan}{(\textbf{TODO:} #1)}}
\else
  \newcommand{\kiril}[1]{\ignorespaces}
  \newcommand{\adir}[1]{\ignorespaces}
  \newcommand{\snir}[1]{\ignorespaces}
  \newcommand{\todo}[1]{\ignorespaces}
\fi

\newcommand\algname[1]{\textsf{#1}\xspace}

\def\pid{\algname{PID}}
\def\fbl{\algname{FBL}}

\def\niceparagraph#1{\vspace{5pt} \noindent \textbf{#1}}

\theoremstyle{definition}
\newtheorem{problem}{Problem}

\newif\ifincludeappendix
\includeappendixtrue  % Default is false; change to true when needed

\begin{document}

% --------------------------------------------------
%  Title & author block
% --------------------------------------------------
\title{%Multi-Robot Path Planning Optimization for Oil Spill Confinement and Cleanup at Sea
Routing and Control for Marine Oil-Spill Cleanup \\ with a Boom-Towing Vessel Fleet
}
\iftoggle{arxiv}{
\author{Snir Carmeli, Adir Morgan and Kiril Solovey% <-this % stops a space
\thanks{The authors are with the Viterbi Faculty of Electrical and Computer Engineering, Technion--Israel Institute of Technology, Haifa, Israel. 
\{\tt snircarmeli, samorgan\}@campus.technion.ac.il, kirilsol@technion.ac.il}%
}
}
{
\author{Author details removed for double-blind review
% <-this % stops a space
}
}

% \author{
%     \IEEEauthorblockN{Snir Carmeli, Adir Morgan and Kiril Solovey}
%     \authorblockA{Viterbi Faculty of Electrical and Computer Engineering\\
% Technion--Israel Institute of Technology, Haifa, Israel\\
% \{snircarmeli, samorgan\}@campus.technion.ac.il, kirilsol@technion.ac.il}
% }

% Render title
\maketitle
\begin{abstract}
Marine oil spills damage ecosystems, contaminate coastlines, and disrupt food webs, while imposing substantial economic losses on fisheries and coastal communities. Prior work has demonstrated the feasibility of containing and cleaning individual spills using a duo of autonomous surface vehicles (ASVs) equipped with a towed boom and skimmers. However, existing algorithmic approaches primarily address isolated slicks and individual ASV duos, lacking scalable methods for coordinating large robotic fleets across multiple spills representative of realistic oil-spill incidents.
In this work, we propose an integrated multi-robot framework for coordinated oil-spill confinement and cleanup using autonomous ASV duos. We formulate multi-spill response as a risk-weighted minimum-latency problem, where spill-specific risk factors and service times jointly determine cumulative environmental damage. To solve this problem, we develop a hybrid optimization approach combining mixed-integer linear programming, and a tailored warm-start heuristic, enabling near-optimal routing plans for scenarios with tens of spills within minutes on commodity hardware.
For physical execution, we design and analyze two tracking controllers for boom-towing ASV duos: a feedback-linearization controller with proven asymptotic stability, and a baseline PID controller. Simulation results under coupled vessel–boom dynamics demonstrate accurate path tracking for both controllers. 
Together, these components provide a scalable, holistic framework for rapid, risk-aware multi-robot response to large-scale oil spill disasters.

 %and solve it using an effective mixed-integer linear program . We generate obstacle-aware, kinodynamically feasible paths for boom-towing pairs and propose two different control algorithms: a feedback-linearization-based controller with proven asymptotic stability with ease of tuning, and a baseline PID controller for ease of implementation. Simulation results demonstrate a comparison between different routing algorithms, and the path-following metrics of the two controllers.
% Marine oil spills pose a persistent, catastrophic ecological threat
% with severe economic repercussions.  mechanical recovery using containment booms and skimmers remains a primary response strategy when environmental and regulatory conditions permit. Autonomous surface vehicles (ASVs) offer the potential for persistent, coordinated cleanup operations, and prior work has demonstrated automated boom towing and single-spill containment. But existing approaches largely focus on isolated slicks, simplify routing and dynamics, and lack guarantees of completeness for control path-following algorithms.
%
\end{abstract}

\iftoggle{arxiv}
{}
{}

\section{Introduction}
The environmental and economic repercussions of marine oil spills are extensive~\cite{KINGSTON200253, smith2011analysis, saadoun2015impact}, highlighting the urgent need for comprehensive research and mitigation strategies. 
%The catastrophic impact on marine ecosystems, coastal communities, and economies demands rapid, efficient methods for containing and cleaning up spills.  
%Commonly deployed techniques include surface skimmers, boom towing, controlled in‑situ burning, sorbents, and chemical dispersants.
Despite decades of safety improvements, spills continue to occur annually.  
In 2024 alone, there were six large tanker spills, each exceeding 700\,t (metric tons), and four medium tanker spills (7–700\,t). The current decade averages 7.4 incidents per year above 7\,t~\cite{ITOPF_Stats_2024}. %—over 90\% fewer than in the 1970s, yet non‑negligible in consequence
Beyond tankers, chronic and operational releases from platforms, pipelines, and coastal activities remain significant contributors to marine oil inputs~\cite{NRC2003_OilInTheSeaIII}. 

Oil can persist in sensitive habitats for years, disrupting marine food webs and degrading ecosystem function \cite{KINGSTON200253,saadoun2015impact}. Major spills also impose substantial response costs and prolonged economic losses to fisheries and tourism. For example, the 2010 Deepwater Horizon spill resulted in over \$65 billion in total costs, including \$14 billion for cleanup operations, \$20 billion in penalties to the BP oil company, and billions more in lost revenue for Gulf Coast fisheries and tourism industries \cite{smith2011analysis}. 
These harms motivate rapid, near‑source confinement and efficient, sustained recovery, especially in nearshore settings where even modest spills can have major impacts.

In response to oil spillage, mechanical recovery with booms and skimmers is often preferred~\cite{NRC2003_OilInTheSeaIII, Fingas2016OSST}. %, complemented by dispersants or in‑situ burning where environmental conditions and regulations permit 
A boom confines the oil that floats on the water surface to a small area and increases its layer thickness. This enables the use of skimmers, which are floating pumps deployed inside the boom‑restricted area, to collect oil from the water surface~\cite{pereda2011towards}. 
% However, effectiveness can be sharply reduced by sea state, encounter rates, towing geometry, and access/logistics constraints; decanting and effluent treatment requirements further complicate continuous operations near shore \cite{NRC2003_OilInTheSeaIII, BSEE_2018_Demulsification, BSEE_2005_Decanting, EPA_2000_CWT_DAF, IMO_MARPOL_AnnexI_2022, IMO_MEPC_107_49, CFR_33_151_10}. 
However, a manual implementation of such a solution struggles to scale across multiple, spatially separated slicks and to maintain persistent operations under evolving conditions (winds, currents, and navigational hazards).
Overall, three gaps repeatedly surface in post‑incident reviews~\cite{NRC2003_OilInTheSeaIII, ITOPF_Stats_2024}: (i) delayed arrival at the highest‑risk slicks, (ii) inefficient utilization of assets across many small and medium‑size patches as conditions evolve, and (iii) conservative near-shore operations due to discharge restrictions and safety requirements. These gaps motivate further research into technological means for treating oil spills and introduce algorithmic challenges for autonomous robotic solutions.   

%emphasize both  first two gaps describe, in algorithmic terms, a coupled task‑allocation and kinodynamic multi‑robot path‑planning problems.
 
% While dynamic spill evolution and shore‑proximity constraints represent important extensions for future work, this research focuses on the core allocation and routing challenges for static spill scenarios in open water.

\begin{figure}
    \centering
    \includegraphics[width=0.4\textwidth]{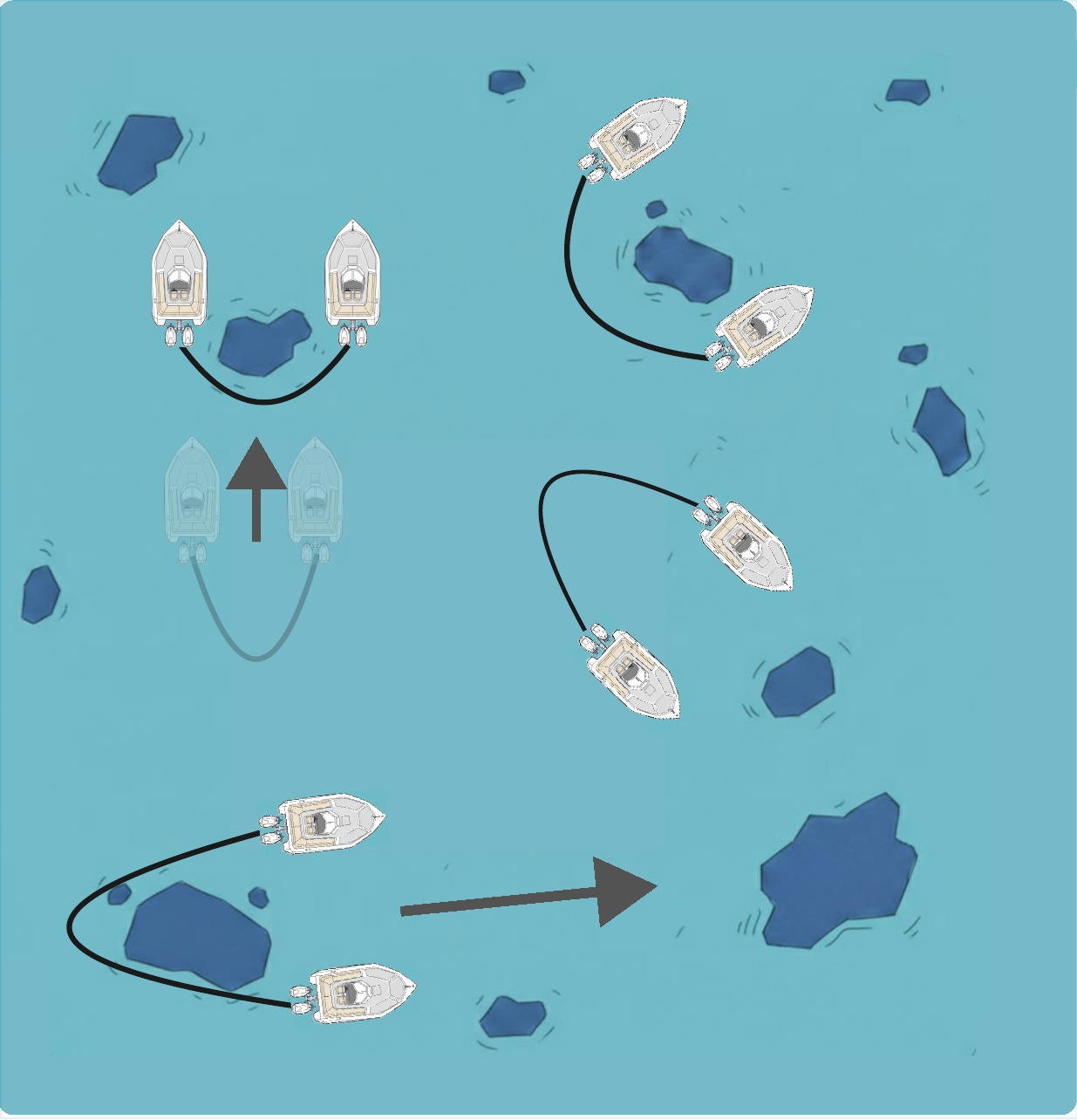}
   \caption{Visualization of coordinated oil-spill confinement and cleanup using autonomous ASV duos equipped with towed booms and skimmers, which we tackle in this work.}
    \label{fig:spills_scenario}
    \vspace{-20pt}
\end{figure} 

\niceparagraph{Contribution.}
We develop an effective integrated algorithmic framework for autonomous oil-spill containment and cleanup using a coordinated team of boom-towing ASV duos. Our key contribution lies in careful modeling of the problem (Sec.~\ref{sec:model}), which gives rise to two computationally-tractable subproblems: (1) routing a team of ASV duos between the oil spills while minimizing the damage incurred by the spills (Figure \ref{fig:spills_scenario}), and (2) a closed-loop path tracking approach  for executing the routing solution by each ASV duo (Figure \ref{fig:setpoints-trajectory}). 
% care
% %
% Our key insight lies in modeling the problem into two interconnected  as damage-aware planning and tracking (Sec.~\ref{sec:model}). , allowing us to narrow computational challenges into two separate problems. %---damage-optimal routing of ASV-duos, and path tracking to  developing a controller for such duos.
%

We address the first problem (Sec.~\ref{sec:routing}) by reducing it to a multi-agent variant of the \emph{traveling repairman problem}~\cite{Blum1994MLP} and by developing a solver that combines  mixed-integer linear programming, and heuristic techniques, capable of tackling realistic problem instances within minutes (Sec.~\ref{sec:exper-eval}).

For the second problem, we develop (Sec.\iftoggle{arxiv}{~\ref{sec:PathTracking}}{~\ref{sec:feedback}}) an approach to execute the routing trajectory for a given ASV duo as a path-tracking problem, without violating boom-induced constraints that connect the two vessels. We then explore two closed-loop control approaches to accomplish path tracking, while accounting for the complex dynamics of the boom-towing duo. Specifically, we consider a baseline PID approach and a feedback-linearization controller with proven asymptotic stability, both of which are shown, through extensive simulations (Sec.~\ref{sec:exper-eval}), to achieve accurate path tracking. 

Together, these components provide a scalable, holistic framework for rapid, risk-aware multi-robot response to large-scale oil spill disasters.

\begin{figure}[t]
    \centering
    \vspace{5pt}
\includegraphics[width=0.55\columnwidth, angle=90]{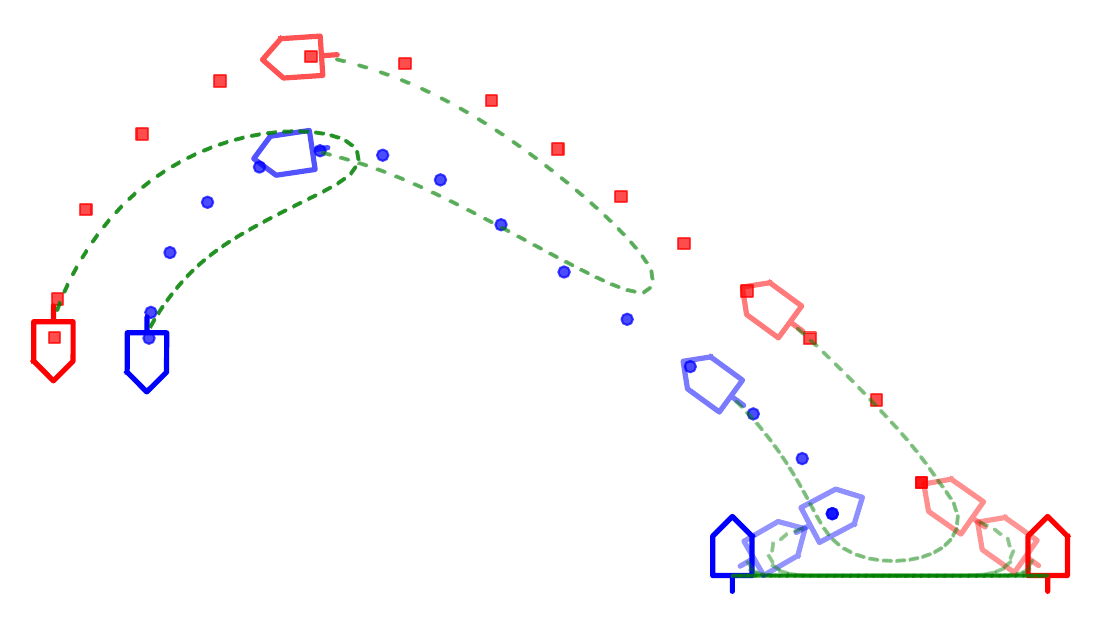}
    \caption{Illustration of the path tracking problem for a boom-towing ASV duo, along with feedback-linearization control behavior, presented in Sec.~\ref{\iftoggle{arxiv}{sec:PathTracking}{sec:feedback}}.  A reference trajectory generated by the routing approach is converted into a sequence of tracked setpoints (blue dots and red squares) for each individual ASV, while satisfying boom constraints.}
    \label{fig:setpoints-trajectory}
    \vspace{-15pt}
\end{figure}

\section{Related Work}
\iftoggle{arxiv}{
A substantial body of literature addresses sensing, monitoring, mechanical confinement, and cleanup for marine oil spills. 

  For \emph{sensing}, a work by Leifer~\cite{leifer2012state} reviews satellite and airborne remote sensing together with in-situ methods.
Additional work by Kumar et al. \cite{kumar2020efficient} develops an efficient path-planning strategy for AUV-based oil-spill detection in coastal waters, using a hybrid evolutionary optimization algorithm to minimize search distance, time, and energy consumption while ensuring high spill coverage.
More recently, Dong et al.~\cite{dong2025review} provide a comprehensive overview of data-driven analysis of remote-sensing data for marine oil spill detection, classification, and thickness estimation. 
Kuzmenko et al.~\cite{kuzmenko2025autonomousoilspillresponse} propose a learning-based approach for real-time oil-spill trajectory prediction, along with a multi-agent tracking. 
%While this architecture demonstrates an integrated prediction–response loop, its multi-agent component is tailored to coordinating several vessels around a single evolving spill via heuristic path assignment and behavior-based control, whereas our framework targets a different scenario: allocating and routing boom-towing teams across multiple concurrent spills with explicit vessel–boom dynamics and provable completeness guarantees.
These sensing capabilities provide the foundation for spill detection and characterization.
In this work, we assume that spill locations, sizes, and characteristics are known through such monitoring systems, allowing us to focus on the subsequent response optimization challenge.
}{
  In this work, we assume that spill locations, sizes, and characteristics are given through existing  sensing or monitoring systems~\cite{leifer2012state,dong2025review,kuzmenko2025autonomousoilspillresponse}, allowing us to focus on the subsequent response optimization challenge.
}

For the task of containment of an oil spill, two-ship boom towing has been modeled and experimentally validated~\cite{giron2015preparing}. 
Controller design has been considered in~\cite{pereda2011towards}, although the specific implementation of the method on boom-vessel model and its use for path tracking have not been fully discussed. 
% the latter work, the authors adopt a null-space–based (NSB) behavioral controller with a clear mathematical formulation, but its adaptation to the specific boom–vessel model is only briefly discussed \kiril{Can you be more specific? Are there any critical details missing there?} \snir{fixed},\color{blue} as the modeling of \color{black} the boom tension forces \kiril{why is it difficult to measure those forces online?}\snir{It is not. In the model derivation (for simulation purposes) they didn't specify how those forces work.} is not fully documented. \kiril{It doesn't provide convergence guarantees, either, right?} \snir{Their control approach could yield convergence guarantees, but it needs to be accompanied by the desired path. We, on the other hand, explain how we combine the controller with the path-tracking, they don't. On a more intuitive way to explain it: if the road is hard to track, i.e. hard turns at high speeds, the math of the convergence proof needs to account for it.}
Another work~\cite{Arr2010} developed a controller for a caging task with an ASV duo, albeit without explicitly considering the towed boom in either theory or experiments. %In contrast, we consider the task of controller design in view of how it integrates within the broader mission goals, while at the same time accounting for realistic modeling assumptions and providing theoretical guarantees, whenever possible. 
In contrast, we frame controller design in the context of overarching mission objectives, incorporate realistic modeling assumptions, and offer theoretical guarantees whenever possible.

%consider the task of controller design in the context of the broad mission goals,  the broader  controllers with explicit modeling 
% connected by a flexible link has been demonstrated with disturbance-aware guidance and moment compensation, including pool experiments . 
% However, this line of work does not provide the formal stability guarantees that our model-based framework establishes.
%Another work employs reinforcement learning (RL) for inter-vehicle docking, enabling a team of ASVs to latch together and automatically preserve the desired boom shape, spacing, and overlap around the slick as they are pushed around by tension and drag forces \cite{kim2012toward}. 
%
%The works mentioned so far target a single slick with a two-vessel formation and largely ignore obstacles and kinodynamic constraints. 
%Moreover, existing control approaches for vessel-boom systems suffer from key limitations: RL-based methods lack general stability guarantees and may fail when the system operates outside the conditions seen during training \cite{Busoni2018, Ye2021}. For \emph{cleanup}, Elmakis and Degani study multi-destination RL within a port with a single USV \cite{Elmakis}. 
%
%\adir{TODO - SHORT lit. review}

At a higher decision-making level, 
a recent approach leverages classical planning and reinforcement learning for multi-spill cleanup using a single ASV~\cite{Elmakis}. 
Several works formulate oil-spill containment and response as optimization problems over emergency resources, depots, and transportation networks~\cite{xu2025boomScheduling,zhang2021ejorOilScheduling,zhanglu2024timeVaryingLRP}.  %typically using time-varying travel times and demand profiles rather than continuous-time vehicle dynamics.
% Xu et al.~\cite{xu2025boomScheduling} develop a multi-objective scheduling model for deploying containment booms under dynamically changing spill conditions, explicitly trading off response time with quantified economic and ecological losses.
% Zhang et al.~\cite{zhang2021ejorOilScheduling} study optimal scheduling of maritime oil-spill emergency resources while accounting for time-varying demand and disruptions in the transportation network, and Zhang and Lu~\cite{zhanglu2024timeVaryingLRP} propose a time-varying multi-resource location-routing formulation for oil-spill emergency logistics.
 % are closely related to our \emph{allocation and routing} component, but their problem setting and algorithmic pipeline impose several limitations for the autonomous ASV routing setting considered here.
However, the latter contributions rely on logistics abstraction (regions, depots, resource types, and network arcs), which cannot immediately translate to actionable ASV plans, and employ population-based metaheuristics with limited predictability in runtime and solution quality. In this context, we develop an efficient optimization framework for the multi-ASV oil-spill response problem while capturing intrinsic problem attributes (e.g., spill geometries and ASV capabilities).

\section{Damage-Aware Routing and Tracking model}\label{sec:model}
In this section, we formalize the oil-spill response problem. We begin by describing a set of operational assumptions that motivate a graph-based abstraction. This abstraction leads to two coupled computational problems: damage-aware routing and trajectory tracking by boom-towing ASV duos.

\subsection{System and Response Model}
We consider a multi-ASV oil-spill response system operating over a planar maritime workspace 
$\mathcal{W} \subset \mathbb{R}^2$, representing a navigable water surface. 
The workspace excludes static obstacles and restricted areas such as shorelines and offshore structures.
Within $\mathcal{W}$, a set of $p$ oil spills $\{S_1,\ldots,S_p\}$ is detected by the system operator. 
Each spill $S_i$ is modeled as a time-invariant region of $\mathcal{W}$ with known volume $\mathcal{V}(S_i)$ and perimeter $C(S_i)$ . 
Spill growth, stochastic drift, and environmental uncertainties are not considered in this formulation. %Spill perimeters and volumes are assumed to be known a priori. 

To respond to an oil-spill scenario, a fleet of $k$ ASV duos $\{B_1,\ldots,B_k\}$ is dispatched from a common depot $d \in \mathcal{W}$ at time $t=0$. 
Each duo consists of two surface vessels towing a containment boom of fixed length $L$.
All ASV duos are assumed homogeneous in transit speed, containment capability, and cleaning rate.  
Upon reaching a spill, a duo performs a containment maneuver: the vessels pass the spill from opposing sides and corral it within the curvature limits of the boom. This is followed by oil skimming using onboard treatment systems.
The cleaning time for spill $S_i$ is modeled as
$
t_i^{\text{clean}} = \alpha^{\text{clean}} \cdot \mathcal{V}(S_i)$,
where $\alpha^{\text{clean}}$ is a known oil-removal rate constant.
We assume that each ASV is equipped with an onboard treatment system capable of separating oil from recovered mixtures and discharging compliant water \cite{BSEE_2018_Demulsification, BSEE_2005_Decanting, EPA_2000_CWT_DAF}. 
This capability enables sustained multi-spill operations without frequent returns to shore for offloading. %Accordingly, we allow routes to terminate after servicing the final assigned spill, without requiring return to the depot.

A route $\tau_j$ assigned to duo $B_j$ specifies both the sequence of spills it services and the corresponding feasible trajectory in $\mathcal{W}$. 
Each route must originate at the depot $d$. 
The total execution time of a route includes both transit times between spills and spill-specific cleaning times.
Given a route set $\T = \{\tau_1,\ldots,\tau_k\}$, we define the elimination time $t_{\T}(S_i)$ as the time at which spill $S_i$ is fully cleaned.

% Specifically, for containment, ASV duos tow booms to corral oil patches and prevent further spreading. In addition, each ASV is equipped with an onboard treatment system %using chemical flocculation combined with dissolved-air flotation (DAF) technology 
% \cite{BSEE_2018_Demulsification, BSEE_2005_Decanting, EPA_2000_CWT_DAF}, which discharges compliant water and retains only concentrated oil and sludge for disposal. 
% Critically, this eliminates the need for frequent shore returns for tank offloading, enabling sustained operations across multiple spill sites. 
% \adir{I think its diving to deep to topics we aren't addressing in our work. } \snir{It is important because if we don't mention it, then we can't clean a lot of oil without discharging it somewhere. we show that there is a mechniacl/chemical solution to fit our setting of the problem.} \adir{Yet we can state it as our contribution, and we can't discuss something here that doesn't appear in the rest of the paper. IMO it should be part of our modeling, together with the other marine-specific assumptions. (intro of section 3, maybe it should be a subsection of its own.) }
%

\subsection{Damage Accumulation as Latency Objective}
Each spill $S_i$ is associated with a risk weight $R_i > 0$, reflecting its environmental severity (e.g., volume or proximity to sensitive areas). 
We model environmental damage as accumulating linearly over time until elimination, as a first-order approximation of exposure-driven impact: the longer a spill remains untreated, the greater the cumulative environmental harm. 
While real spill dynamics may exhibit nonlinear spreading and ecological effects, a linear accumulation model captures the essential urgency of early intervention while preserving the tractability of the planning problem.
Thus, for a given route set $\T$, the total damage incurred by spill $S_i$ is $R_i \, t_{\T}(S_i)$.
The total cumulative damage is therefore
\begin{equation} \label{eq:objective}    
    \sum_{i=1}^p R_i \, t_{\T}(S_i). 
\end{equation}
Minimizing cumulative damage is thus equivalent to minimizing a weighted sum of spill completion times. 
% \adir{Touch the "early stages matters most" topic}

\subsection{Motion-Based Graph Representation}
\label{Motion_Based_GR}
To enable scalable task-level planning, we abstract the continuous routing problem (associated with optimizing Eq.~\eqref{eq:objective}) into a directed graph representation $G=(V,E)$. The vertex set represents the depot and spill locations, i.e. $V = \{d\} \cup \{S_1,\ldots,S_p\}$.
Edges $(i, j)\in E$ correspond to feasible ASV duo motion segments between spills. 
The cost of such motion $c_{ij}$ represents the time required for a duo to transit from $i$ to $j$ along a bounded-curvature, obstacle-avoiding path, and perform containment and cleaning at $j$ (if $j\neq d$). 
For spill-to-spill transitions, we assess motion time as: 
\[
c_{ij}
=
% \frac{C(S_i)}{v_{\textup{encircle}}}
% +
\frac{d(S_i,S_j)}{v_{\textup{transit}}}
+
\frac{C(S_j)}{v_{\textup{encircle}}}
+
t_j^{\text{clean}},
\]
where 
$d(S_i,S_j)$ is the length of the shortest curvature-constrained collision-free path between spill centers, $C(S_j)$ denotes the perimeter of spill $S_j$. The values  $v_{\textup{transit}}$ and $v_{\textup{encircle}}$ represent the transit and containment maneuver speeds, respectively. 
%
% \begin{itemize}
%     \item $d(S_i,S_j)$ is the length of the shortest curvature-constrained collision-free path between spill centers,
%     \item $C(S_j)$ denotes the perimeter of spill $S_j$,
%     \item $v_{\textup{transit}}$ is the constant transit speed,
%     \item $v_{\textup{encircle}}$ is the containment maneuver speed.
% \end{itemize}

The graph abstraction assumes that edge costs $c_{ij}$ correspond to dynamically feasible motion segments between spills, consistent with the boom-towing ASV duo dynamics and controller capabilities (Sec.~\iftoggle{arxiv}{
\ref{sec:BoatDynamics}, ~\ref{sec:Controller-Definition}}{\ref{sec:feedback}}). In principle, curvature-constrained motion planning could be employed to ensure strict kinematic feasibility of the duo.
For the purposes of evaluation, however, we adopt a geometric approximation: the workspace $W$ is discretized into an occupancy grid, and obstacle-avoiding shortest paths between representative spill locations (e.g., centroids) are computed using $A^*$ search. The resulting path length is used to estimate the transit component of $c_{ij}$, while the velocity parameters $v_{\textup{transit}}$ and $v_{\textup{encircle}}$ are calibrated based on controller performance (Sec.~\ref{sec:control-results}).

\subsection{Damage-Minimizing Routing Problem} \label{sec:routing-problem-definition}
Using the graph abstraction introduced above, the oil-spill response task reduces to selecting $k$ routes in $G$ that minimize the cumulative weighted completion time of all spills. Each route $\tau_j\in \T$ starts at the depot and services a subset of spills, where the completion time $t_{\T}(S_i)$ of spill $S_i$ is determined by its arrival and service time along the assigned route.

\begin{problem}[Damage-Minimizing Routing]\label{prob:planning}
Given $G=(V,E,c)$, number of ASV duos $k$, and spill risk-factors $\{R_i\}$, find a route set $\T=\{\tau_1,\ldots,\tau_k\}$ that minimizes %the objective in 
Eq.~\eqref{eq:objective}.
\end{problem}

Algorithmically, Problem~\ref{prob:planning} corresponds to a weighted multi-agent variant of the \emph{traveling repairman problem} (TRP)~\cite{muritiba2021branch,Blum1994MLP}. %also known as the Minimum Latency Problem (MLP)~\cite{Blum1994MLP}.
% Even the single-agent TRP is NP-hard~, and the weighted and multi-agent generalizations considered here remain NP-hard. 
This structural connection allows us to leverage algorithmic tools developed for TRP, providing efficient and scalable solvers for the presented problem (Sec.~\ref{sec:routing}).

\subsection{Trajectory Tracking Control Problem}
The planning layer produces, for each duo, a reference trajectory $\tau_j$ in the continuous workspace. 
To ensure the physical execution of the reference, we formulate a tracking control problem.
\begin{problem}[Trajectory Tracking for Boom-Towing Duo] \label{prob:control}
Given a trajectory $\tau_j$, design a path-tracking controller for the full boom-towing ASV duo system, for executing the reference in a stable and error-minimizing manner.
\end{problem}

\iftoggle{arxiv}
{
We consider the dynamics of a duo system in Sec. \ref{sec:dynamics}, and develop a solution addressing Problem~\ref{prob:control} in Sec \ref{sec:PathTracking}.
}
{
In 
Sec.~\ref{sec:feedback} we consider the dynamics of a duo system and develop a solution addressing Problem~\ref{prob:control}.}

\section{Solving Damage-Minimizing Routing} \label{sec:routing}
Leveraging the modeling framework defined in Sec.~\ref{sec:routing-problem-definition}, we now address the damage-minimizing routing (DMR) problem (Problem \ref{prob:planning}) algorithmically. Since DMR is a weighted multi-agent variant of TRP, and therefore NP-hard \cite{Blum1994MLP}, we adopt a mixed-integer linear programming (MILP) formulation solved via branch-and-bound (BnB) as the primary method for multi-agent coordination.
While the MILP provides optimality guarantees, scalability considerations motivate accelerating its convergence. We therefore complement it with a structured warm-start heuristic that supplies high-quality initial solutions to the BnB solver. This heuristic is augmented with an exact single-agent\footnote{We use the term "agent" to refer to a boom-towing ASV duo.} dynamic programming (DP) refinement step, ensuring locally optimal routes under fixed spill assignments. 
As demonstrated in Sec.~\ref{sec:exper-eval}, the hybrid MILP and DP-augmented warm-start framework yields routing solutions that are frequently near-optimal, as certified by tight lower bounds obtained during branch-and-bound, while remaining computationally efficient for large spill instances. We will provide a complete open-source implementation of the solver upon publication. 
% \footnote{The implementation will be made publicly available upon publication.}

\subsection{MILP Formulation for DMR}
To solve the multi-agent DMR problem, we formulate it as a mixed-integer linear program (MILP). Rather than employing a time-indexed or explicit completion-time formulation, which would introduce large time-expanded state spaces, we adopt an edge-centric representation of the weighted latency objective. This representation yields a tighter linear relaxation and improves scalability under branch-and-bound.

We adopt the MILP formulation of Muritiba et al.~\cite{muritiba2021branch} for the weighted $k$-TRP as the foundation of our solver. We further introduce structural modifications to reflect the spill-response setting. For brevity, we present only the essential components here, with full implementation details and solver configurations provided in our open-source repository.

% We adopt the wkTRP MILP formulation of Muritiba et al.~\cite{muritiba2021branch} as our core optimization framework, and adapt it to reflect the structural characteristics of spill response, including terminal service without depot return

% We build our solution upon the framework , originally developed for the weighted wk-TRP by exploiting its linear ordering structure~\cite{mendez2008new}. We further introduce structural modifications to reflect the spill-response setting. For brevity, we present only the essential components here, with full implementation details and solver configurations are provided in our open-source repository.

\niceparagraph{Objective.}
By adopting an edge-centric reformulation for weighted latency, the cumulative-damage objective \eqref{eq:objective} can be expressed as:
\begin{equation}
\min 
\sum_{v \in V} R_v \cdot \sum_{(i,j)\in E} c_{ij} f_{ij}^v ,
\label{eq:milp_obj}
\end{equation}
where the decision variable $f_{ij}^v \in \{0,1\}$, indicates whether edge $(i,j)$ is traversed by an agent on the route that eventually services spill $v$.
Consequently, the completion time of spill $v$ is $t_{\T}(v)=\sum_{(i,j)\in E} c_{ij} f_{ij}^v$. 
While this variable-per-spill approach induces a large formulation of $O(|V|^3)$ variables, it allows the objective to naturally account for multiple agents and the assignment of spills to agents.

\niceparagraph{Constraints.}
To ensure routing consistency across the multi-agent fleet, the formulation enforces the following constraints:
\begin{itemize}
    \item \textit{Fleet Initialization:} $\sum_{j=1}^n f_{0j}^j = k$ ensures exactly $k$ agents depart from the depot.
    \item \textit{Service Termination:} $\forall v, \sum_{i \neq v} f_{iv}^v \geq 1$ ensures that there is a route serving each spill   
    \item \textit{Route Continuity:} $\forall v, \sum_{i=1}^n f_{0i}^v = 1$ ensures that the route serving each spill $v$ originates at the depot $d=0$.
    \item \textit{Flow Consistency:} For each vertex $i \in V$ and each spill $j$ ($i \neq j$), we enforce $\sum_{k \neq i, j} f_{ij}^k \leq |V|\cdot f_{ij}^j$. This ensures that if an edge $(i, j)$ is used to reach a "later" spill $k$, it must also be marked as part of the path for the "immediate" spill $j$. 
    \item \textit{Subtour Elimination:} We utilize a set of constraints to forbid disconnected cycles, ensuring all routes are linked to the depot, using additional ordering variables.
\end{itemize}

\niceparagraph{Adaptations to Spill Response.}
Compared to the wkTRP formulation in \cite{muritiba2021branch}, we introduce several structural modifications to reflect the problem setting: 
\begin{itemize} 
    \item As environmental damage ceases upon spill treatment, and spills may be treated sequentially without depot return, we set no requirement for agents to depart from the final serviced node or return to the depot.
    \item All spill characteristics, such as treatment time, encircling etc., are integrated into motion graph edge weights.
\end{itemize}

\subsection{Warm-start Heuristic}\label{sec:method-heuristic}

Solving the MILP formulation exactly via branch-and-bound (BnB) can become computationally demanding as the number of spills grows. To improve scalability, we design a dedicated heuristic that produces high-quality feasible solutions at low computational cost. These solutions are subsequently used as warm-start incumbents for the BnB solver, and as we show in Sec.~\ref{sec:exper-eval}, significantly reduce the explored search tree and accelerate convergence. 
The heuristic consists of three stages:

\niceparagraph{(H1) Greedy Spill Assignment.} 
The first stage constructs an initial assignment of spills to ASV duos using a priority-driven greedy policy. To balance urgency and travel cost, we adopt an importance-to-travel-time criterion. Each agent maintains its accumulated route time, and agents are stored in a min-priority queue keyed by this value. At each iteration, we extract the agent with the smallest accumulated time and, letting $v$ denote the last spill assigned to it, select the unassigned spill $u^*$ such that:
\[
u^* = \arg\max_{u \in \mathcal{U}} \frac{R_u}{c_{vu}},
\]
where $R_u$ is the spill risk weight and $c_{vu}$ is the estimated travel-and-service time. This ratio serves as a greedy proxy for minimizing weighted completion time while promoting balanced workload distribution.

\niceparagraph{(H2) DP-based Visiting order optimization.}\label{dp}
Given the assignment from H1, we optimize the visiting order for each agent independently using dynamic programming (DP) for single-agent TRP~\cite{cormen2009introduction}.

For a spill subset $S \subseteq V$ and terminal spill $j \in S$, let $D(S,j)$ denote the minimum cumulative damage of any tour that starts at the depot, visits exactly the spills in $S$, and terminates at $j$.
Define for any subset $S'\subseteq V$ the total weight
\[
W_{S}= \sum_{u \in {S}} R_u,
\]
and define the remaining weight function: $\bar W_{S} := W_{V} - W_{S}$.
The DP recursion is then
\[
D[S, j]
=
\min_{v \in S \setminus \{j\}}
\left\{
    D[S \setminus \{j\}, v]
    +
    c_{vj}\cdot\bar{W}_{S \setminus \{j\}}
\right\}.
\]

The term $c_{vj}\cdot\bar{W}_{S \setminus \{j\}}$ reflects the latency objective: traversing edge $(v,j)$ increases the completion time of all unvisited spills by $c_{vj}$, and thus contributes proportionally to their total remaining weight. 
This ``edge-centric'' formulation avoids the need for a time-expanded state space, enabling us to preserve the $O(2^{|S|}|S|)$ complexity characteristic of the subset-DP method~\cite{held1962dynamic}.

In practice, this solver is limited by commodity RAM capacity to $|S|\leq 25$. As we present in Sec.~\ref{sec:exper-eval}, %using optimized implementation, 
this solver provides optimal solutions for problems in this scale in a few minutes, yielding a viable solver option for small to medium scale instances. When DP is too computationally costly, we only run the greedy ordering (H1). %to preserve tractability. 

\niceparagraph{(H3) Iterated local search.}
Finally, we refine the solution via an iterated local search (ILS) \cite{lourencco2003iterated} procedure that explores spill reassignment between agents. At each iteration, two spills are selected uniformly at random, and their assigned agents are swapped. For each affected agent, the visiting order is recomputed using the DP procedure of H2, and the total objective value is evaluated.
If the reassignment strictly improves cumulative damage, the new solution is accepted; otherwise, it is rejected. After a fixed number of iterations, the best solution found is returned and used to warm-start the MILP solver.

\iftoggle{arxiv}{

\section{Dynamics of a boom-towing ASV duo}
\label{sec:dynamics}
In preparation for designing a path tracking approach for a boom-towing boat duo (Sec. ~\ref{sec:PathTracking}), we discuss the dynamics of this system. We first consider the dynamics of each boat separately, as well as the boom dynamics, before considering the coupled system and discussing our modeling limitations. 

\subsection{Single ASV dynamics}
\label{sec:BoatDynamics}
We first describe the dynamics of an ASV duo towing a boom. 
Building on simplified single-ASV models ~\cite{Fossen2011}, we capture boom effects as an external load applied at each vessel’s tow point. 
Fig.~\ref{fig:hull-frame} shows the body-fixed surge–sway frame attached to the vessel's center of mass (CoM); inertial axes point East ($\mathbf{e_1}$) and North ($\mathbf{e_2}$), where $u$ and $v$ are the surge and sway velocities in the body frame. 
Control inputs are the propulsion thrust $F$ and the steering (rudder) angle $\eta$. Neglecting heave, roll, and pitch ~\cite{Fossen2011}, the rigid–body equations of motion (EOM) in the horizontal plane are
\begin{subequations}
\label{eq:bare-hull}
\begin{align}
\dot{u} &= \frac{F\cos\eta \;-\;\mu_{l}(u)\,u}{m}
          \;+\; \omega\,v, \label{eq:bare-hull-u} \\[4pt]
\dot{v} &= \frac{-\,F\sin\eta \;-\;\mu_{t}(v)\,v}{m}
          \;-\; \omega\,u, \label{eq:bare-hull-v} \\[4pt]
\dot{\omega} &= \frac{r\,F\sin\eta \;-\;\mu_{\omega}(\omega)\,\omega}{I},
          \label{eq:bare-hull-w}
\end{align}
\end{subequations}
Here,  $\omega$ is the yaw rate, $m$ is the mass, $I$ is the planar moment of inertia about the CoM, \(r\) is the propeller offset from the CoM, and \(\mu_{l},\mu_{t},\mu_{\omega}\) are the nonlinear drag coefficients that depend on the corresponding speed components, respectively: $
    \mu_* (f(t)) = \kappa_* \cdot |f(t)|$ for some $\kappa_* > 0$. 
The cross-coupling terms \(\omega v\) and \(-\omega u\)  represent Coriolis effects in the body frame.
\begin{figure}[H]  % “H” pins the float right HERE (needs the float package)
  \centering
  \includegraphics[width=0.8\linewidth]{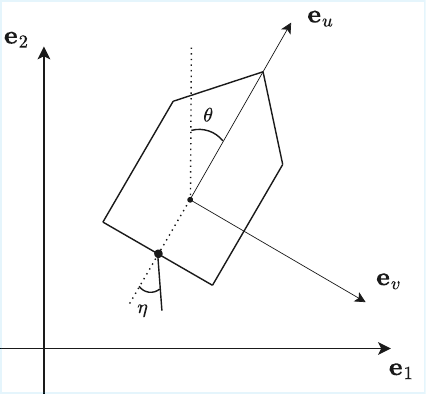}
  \caption{Body–fixed surge–sway frame \((u,v)\) attached to the vessel’s
           center of mass (CoM).}
  \label{fig:hull-frame}
\end{figure}

% Next, we consider the oil-containment boom, following  
% the derivation in \cite{pereda2011towards} \color{blue} with our contribution of inter-linking forces. \color{black} \kiril{I don't understand the addition. Are you saying that inter-linking forces were not considered in pereda2011towards? If that's the case, what was modeled in that paper? } \snir{They were considered, but their calculations were not published - They did not explain how they calculated the forces between links.} \kiril{So what is the point of mentioning their paper here if it gives no relevant information? }

\subsection{Boom dynamics}
\label{BoomDynamics}
The boom is modeled as a two-dimensional articulated chain
made up of \(n\geq 1\) identical, rigid links of length $L>0$.  Adjacent links are joined by
pin connections that are complemented with a spring-damper model of constant parameters. Figure~\ref{fig:link-frame} describes the tangential-normal frame ($\mathbf{t,n}$) fixed to each link’s CoM.
%This is similar to a previous model \cite{pereda2011towards}. %, accompanied by a model to estimate tension forces.

\begin{figure}[H]  % “H” pins the float right HERE (needs the float package)
  \centering
  \includegraphics[width=1.05\linewidth]{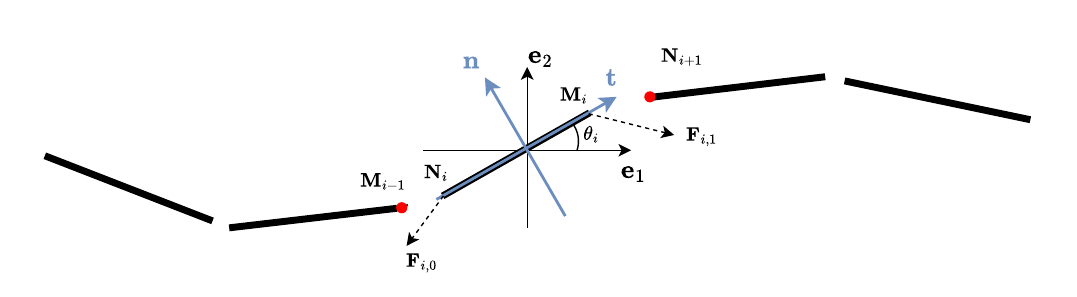}
  \caption{Visualization for the boom dynamics. Body–fixed surge–sway frame \((t,n)\) attached to the link’s
           center of mass (CoM); inertial axes point \textit{East} (\(e_1\)) and
           \textit{North} (\(e_2\)). The control inputs are the forces and torques applied by nearby linked bodies.}
  \label{fig:link-frame}
\end{figure}

The velocity of a link $1\leq i\leq n$ is given by
\[
\mathbf{v_i} = v_{t,i} \cdot \mathbf{t}_i + v_{n,i} \cdot \mathbf{n}_i, 
\]
where 
$\mathbf{t_i}$ is a unit vector of the $i$-th link in the direction of the link (t-tangent), and $\mathbf{n_i}$ is a unit vector, perpendicular ($n$-normal) to the $i$-th link such that $\mathbf{t_i} \times \mathbf{n_i}=\mathbf{e_3}$,  ( $\mathbf{e_3}$ is perpendicular to both $\mathbf{t_i}$ and $\mathbf{n_i}$). 
The angular velocity is:
\[
\mathbf{\omega_i}=\omega_i\cdot \mathbf{e_3}
\]
We model two types of forces that are applied to each link $i$. %The hydrodynamical drag on 3 axes and a tension force from each adjacent link, which is modeled as a spring-damper force.
The hydrodynamic drag force $\mathbf{F}_{i, \text{drag}}$ and torque $\mathbf{\tau}_{i,\text{drag}}$ on link $i$, which are induced by the environments, are given by 
\begin{align*}
    \mathbf{F}_{i, \text{drag}} &= - \mu_t^*(v_{t,i})v_{t,i} \mathbf{t}_i - \mu_l^*(v_{n,i})v_{n,i} \mathbf{n}_i, \\
    \mathbf{\tau}_{i,\text{drag}} &= -\mu_{\omega}^*(\omega_i)\omega_i, 
\end{align*}
where $v_{t,i}$ and $v_{n,i}$ are the linear velocities of link $i$ in the tangential ($\mathbf{t}_i$) and normal ($\mathbf{n}_i$) directions, respectively. The terms $\mu_t^*, \mu_l^*,$ and $\mu_{\omega}^*$ represent nonlinear friction coefficients of the same form like in Sec. ~\ref{sec:BoatDynamics}

The inter-linking forces are modeled as a spring-damper system to keep the links close together (due to the spring) and to prevent oscillations (due to the damper).
Specifically, 
\begin{align*}
    \mathbf{F}_{i,1} = & k \cdot (\mathbf{N}_{i+1} - \mathbf{M}_i)  \\ & + c \cdot [(\dot{\mathbf{N}}_{i+1} - \dot{\mathbf{M}}_i) \cdot \mathbf{e}_{\mathbf{N}_{i+1}, \mathbf{M}_i}] \cdot \mathbf{e}_{\mathbf{N}_{i+1}, \mathbf{M}_i} , \\
    \mathbf{F}_{i,0} = & k \cdot (\mathbf{M}_{i-1} - \mathbf{N}_i) \\ & + c \cdot [(\dot{\mathbf{M}}_{i-1} - \dot{\mathbf{N}}_i) \cdot \mathbf{e}_{\mathbf{M}_{i-1}, \mathbf{N}_i}] \cdot  \mathbf{e}_{\mathbf{M}_{i-1}, \mathbf{N}_i},
\end{align*}
where $\mathbf{F}_{i,1}$ is the force applied on link $i$ by the link $i+1$ or a vessel (if it is the last link) and $\mathbf{F}_{i,0}$ is the force applied on link $i$ by the link $i-1$ or another vessel (if it is the first link).
The points $\mathbf{M}_i,\mathbf{N}_i\in \dR^2$ are the right and left edges of link $i$, respectively.
The constants $c$ and $k$ denote the linear damping and spring coefficients, respectively.
Additionally, $\mathbf{e}_{a, b}\in \dR^2$ is a unit vector going from $b\in \dR^2$ to $a\in \dR^2$.

% These forces \kiril{which ones?} are modeled as such for the following reasons:
% \begin{itemize}
%     \item Drag forces are applied on the links from their environment (the water).
%     \item I
% \end{itemize}

% The above model of the boom dynamics allows us to simulate each link's movement, and thus the entire boom's movement. Using this model, we now apply Newton-Euler equations \cite{Euler1765Theoria} \kiril{replace with a more useful resource. E.g., textbook on mechanics} to obtain a differential equation and solve for each timestep the next state of the boom. Applying Newton–Euler equations to each link yields the equations 

The boom model above enables simulation of each link and, consequently, the full boom configuration.
Using this model, we formulate the link dynamics via the Newton-Euler rigid-body equations, yielding a set of differential equations which we integrate forward in time to obtain the boom state at each timestep. Applying said equations to each link yields the equations  %\kiril{the following equation has notations that haven't been defined. $a,m,I,\theta$...}
\begin{subequations}
 \label{eq:boom-forces}
\begin{align}
m\,\mathbf{a}_i
&= m_{\text{link}}\Big[(\dot{v}_{t,i}-v_{n,i}\dot{\theta}_{\text{i}})\,\mathbf{t}_i
    +(\dot{v}_{n,i}+v_{t,i}\dot{\theta}_{\text{i}})\,\mathbf{n}_i\Big]
\notag\\
&= \mathbf{F}_{i,0}+\mathbf{F}_{i,1}+\mathbf{F}_{i,\text{drag}}, 
\label{eq:boom-force}\\[4pt]
I\,\dot{\omega}_i
&= \boldsymbol{\tau}_{i,\text{drag}}
  +\frac{L}{2}\,(\mathbf{F}_{i,1}-\mathbf{F}_{i,0})\cdot \mathbf{n}_i,
\label{eq:boom-torque}
\end{align}
\end{subequations}
where
$\mathbf{a}_i,m,I$ and  $\theta_i$ are the acceleration, mass, moment of inertia, and orientation of link $i$, respectively.

% The links on the edges of the boom are affected by forces from an adjacent link and from the boat, rather than other links, which are affected by forces from two adjacent links.  \kiril{I don't understand this statement. The end links are affected the boats and the links that are immediately attached to them. The latter, in turn, are also affected by other links in the system, all of the ingredients in this system affect each other.} 
\color{black}

For an interior link, the external forces include the interaction forces transmitted through its two adjacent joints (from the neighboring links on either side). 
% \kiril{how are those statments reflected in the above equations?}\snir{This statement and the one following can still be described by the same equations. The points $N_{n+1}$ and $M_0$ are the two boats. I've added this now.}
For an end link, only one adjacent joint transmits link-link interaction forces; the remaining boundary load is instead provided by the vessel attachment (i.e., the tow-point force/tension applied by the boat).
This is described as the points $N_{n+1}$ and $M_0$ (Fig. \ref{fig:link-frame}), which are the sterns of the boats. 
Thus, the full boom-vessel system remains dynamically coupled because forces propagate through the chain and the resulting tension feeds back into the vessel dynamics.

% ------------------------------------------------------------

\subsection{Combined duo-boom dynamics}
\label{CombinedDynamics}
% To obtain a full model of the system, we combine the boat dynamics with the boom dynamics. 
% Con
% Link 1 is attached to the stern of Ship 1, and link $n$ to the stern of Ship 2. The boom tension vector $\mathbf{f}_l$ acting on the vessel is assumed measurable (load cell).
% Equations \eqref{eq:bare-hull-u}–\eqref{eq:bare-hull-w}  now include the boom tension force $ \mathbf{f}_l $: \kiril{It's not immediately clear how equation \eqref{eq:boom-forces} comes into play here.}
Next we consider the combined dynamics of the coupled ASVs used in the controller design, where we  treat the boom as exerting an external load at the tow point on each vessel.

At every timestep, the boom dynamics (Eq.~\eqref{eq:boom-forces}) provide the tension/reaction force at the attachment link; by Newton's third law, the same force acts on the vessel with opposite sign.
We denote this tow-point load by $\mathbf f_l$ (expressed in the vessel's body frame), and assume it is measurable in practice (e.g., using a load cell at the stern attachment).
Substituting this external load into the bare-hull Equations~\eqref{eq:bare-hull} yields:
\begin{subequations}
\label{eq:disturbance-hull}
\begin{align}
\dot{u} &= \frac{F\cos\eta \;-\;\mu_{l}(u)\,u \; + \; \mathbf{f}_l \cdot \mathbf{e_u}}{m}
          \;+\; \omega\,v, \label{eq:bare-hull-u_d} \\[4pt]
\dot{v} &= \frac{-\,F\sin\eta \;-\;\mu_{t}(v)\,v \; + \; \mathbf{f}_l \cdot \mathbf{e_v}}{m}
          \;-\; \omega\,u, \label{eq:bare-hull-v_d} \\[4pt]
\dot{\omega} &= \frac{r\,F\sin\eta \;-\;\mu_{\omega}(\omega)\,\omega \; - \; r \cdot \mathbf{f}_l \cdot \mathbf{e_v}}{I}.
          \label{eq:bare-hull-w_d}
\end{align}
\end{subequations} 

\niceparagraph{Discussion.} We discuss the assumptions underlying the above model and its limitations. 
%The model is restricted to planar motion. %In particular, heave, roll, and pitch are neglected, as are other out-of-plane boom deformations. 
Hydrodynamic loads are represented via lumped, speed-dependent drag terms, while unmodeled environmental effects (e.g., wind, currents, and higher-order fluid-structure interaction) are neglected. The boom-vessel coupling enters the vessel EOM only through the tension vector $\mathbf f_l$, which we assume is measurable at the attachment point (e.g., via a load cell). 
(We emphasize that in the experimental results, we simulate the dynamics of the full model including all the individual links.)  %The boom itself is modeled as a planar chain of rigid links connected by pin joints with linear spring-damper interactions. % thus, detailed 3D geometry, buoyancy-induced catenary effects, and complex hydrodynamic coupling are not captured. 

%Despite those limitations, this model is sufficient to our purpose. For guidance and low-to-moderate-speed maneuvering of marine surface craft, the standard 3-DOF horizontal-plane model is widely used and captures the dominant dynamics whenever vertical motions are small compared to the planar motion \cite{Fossen2011}. Moreover, 
Prior boom-towing modeling and experiments adopt similar planar abstractions and demonstrate that they are adequate for closed-loop containment maneuvers \cite{pereda2011towards,giron2015preparing}, yet they do not describe the modeling of the forces between links. %\color{black}.\kiril{please clarify the difference, if any, between our work and those paper. The question is whether we present the model only for completeness (as it was described in previous work), in which case, we'll move it to the appendix, or bacause it provides additional value. }  %Since our focus is on multi-spill allocation, obstacle-aware planning, and trajectory tracking under measured boom forces, this planar model provides an appropriate fidelity-complexity trade-off for validating the proposed methods.

\section{Path tracking}\label{sec:PathTracking}
In this section, we introduce our paradigm for controlling individual ASV duos to execute the solution derived by the routing layer (Sec.~\ref{sec:routing}) to address Problem \ref{prob:control}.

%Next, we leverage the structure of our dynamical model (Eq.~\eqref{eq:disturbance-hull}) to design path tracking approaches.

\subsection{From Routing to Control}
\label{sec:SetpointsTracking}
Next, we leverage the structure of our dynamical model (Eq.~\eqref{eq:disturbance-hull}) to design feedback controllers.
Our objective is \emph{path following} of each vessel $i\in \{1,2\}$ in the horizontal plane, i.e., to drive $(x_i(t),y_i(t),\theta_i(t))$ to a path reference.  %$(x_i^\star(t),y_i^\star(t),\theta_i^\star(t))$.
The system is nonlinear and \emph{dynamically coupled} in two senses: (i) surge-sway-yaw DoFs are coupled through the rigid-body equations (Eq.~\eqref{eq:disturbance-hull}), and (ii) vessel and boom dynamics are coupled through the measured boom tension $\mathbf f_{l}$ applied at the tow point. The controller outputs are the thrust force $F_i$ and steering angle $\eta_i$. %, as they appear in Eq. \eqref{eq:disturbance-hull}.

To use such a controller for \emph{path following}, we convert the route obtained from the solution of the DMR problem into reference signals for the controlled DoFs (surge velocity $u$ and yaw/orientation $\theta$). Concretely, we discretize the path into a sequence of setpoints (Fig. ~\ref{fig:setpoints-trajectory}), producing piecewise-constant reference commands, where the two towing vessels are commanded to follow \textit{offset} paths whose separation is strictly smaller than the boom length.

When one vessel reaches its setpoint, its surge reference is set to zero so it holds position until the other vessel arrives, preventing the inter-vessel distance from exceeding the boom length. Once both vessels are aligned, their surge references are set to a constant cruising value.  This strategy motivates regulating surge velocity rather than position: setting  $u_{\mathrm{ref}}=0$ avoids aggressive orientation corrections that could violate the boom constraint. In contrast, direct position control may generate large transients and actuator saturation for large errors, whereas bounded velocity references limit this effect. 

%\label{sec:Controller-Definition}

\subsection{Feedback control}
\label{sec:Controller-Definition}
To enforce the above path-tracking rationale, we consider two approaches.

\niceparagraph{PID.} As a baseline, we implement a standard \pid controller for an individual boat with the following structure:
\begin{align}
F(s) &= \left(K_{p,u} + K_{i,u}\frac{1}{s} + \frac{K_{d,u}s}{\tau_{u}s + 1}\right) E_u(s), \\
\eta(s) &= \left(K_{p,\theta} + K_{i,\theta}\frac{1}{s} + \frac{K_{d,\theta}s}{\tau_{\theta}s + 1}\right) E_\theta(s),
\end{align}
Here, $F(s)$ and $\eta(s)$ are the Laplace transforms of the propulsion force and steering angle, and $E_u(s)$ and $E_\theta(s)$ are those of the surge velocity and orientation errors, respectively.
Each actuator is tuned by four parameters:
$K_p$, $K_i$, and $K_d$ set the proportional, integral, and derivative actions, and $\tau$ is for the derivative low-pass filter (for causality). 

\niceparagraph{Feedback linearization.} As a model-aware approach, we employ feedback linearization (\fbl)  with virtual control terms~\cite{Khalil2002NonlinearSystems} to decouple the nonlinear 3-DoF vessel dynamics into two manageable linear DoFs: surge velocity $u$ and yaw orientation $\theta$. 
Both of these DoFs are controlled via a lead controller. The third DoF, sway velocity $v$, is left uncontrolled. From a theoretical standpoint  this controller guarantees, under piecewise constant references,
$
u(t)-u_{\mathrm{ref}}\to 0, \theta(t)-\theta_{\mathrm{ref}}\to 0$, and  $v(t)\to 0$, as $t\to\infty$ (proof in Sec. ~\ref{sec:Guarantees}). Moreover, as we will see below, it requires only \emph{4} tuning parameters.

Starting from the surge $u$  and yaw $\omega$ equations of dynamics in Eq.~\eqref{eq:disturbance-hull}, the inputs $F(t)$ and $\eta(t)$, act through the body-frame components $F(t)\cos\eta(t)$ (surge) and $F(t)\sin\eta(t)$ (yaw). 

We split each actuation component into a virtual control,  $\alpha_u(t)$ and $\alpha_\omega(t)$, and a lumped term, $d_u(t)$ and $d_\omega(t)$, capturing known/estimated disturbances in  Eq.~\eqref{eq:disturbance-hull}:
\begin{subequations}
\label{eq:force_steering_deconstruct}
\begin{flalign}
F(t)\cos(\eta(t)) & = \alpha_u(t) + d_u(t),   \label{eq:F_cos}\\%[4pt]
F(t)\sin(\eta(t)) & = \alpha_\omega(t) + d_\omega(t).   \label{eq:F_sin}
\end{flalign}
\end{subequations}

To make those equations explicit, we define the lumped terms $d_u(t)$ and $d_\omega(t)$ by collecting all non-actuation contributions from the surge and yaw equations (Eq.  \eqref{eq:disturbance-hull}), i.e., the projections of the boom tension and the hydrodynamic/coriolis terms. This yields
\begin{subequations}
\label{eq:disturbance_virtual_components}
\begin{flalign}
d_u(t) & = -\,\mathbf f_l(t)\!\cdot\!\mathbf e_u(t)
          + \mu_l\!\big(u(t)\big)\,u(t) - m\,\omega(t)\,v(t),  \label{eq:u_dist} \\  
d_\omega(t) & = \mathbf f_l(t)\!\cdot\!\mathbf e_v(t)
          + \dfrac{\mu_\omega\!\big(\omega(t)\big)\,\omega(t)}{r}. \label{eq:omega_dist}
\end{flalign}
\end{subequations}
Substituting Eq. \eqref{eq:force_steering_deconstruct} and \eqref{eq:disturbance_virtual_components} into Eq. \eqref{eq:disturbance-hull}, we obtain two linear and one nonlinear EOMs, respectively:
\begin{subequations}
\label{eq:Linear_EOMs}
\begin{flalign}
\dot{u}(t) & = \frac{\alpha_u(t)}{m},   \quad
\dot{\omega}(t) = \ddot{\theta}(t)  = \frac{r}{I} \cdot \alpha_\omega (t), \label{eq:u_w_dot} \\
\dot{v}(t) & = \frac{ -\alpha_\omega(t) - \mu_{t}(v(t))v(t) - \frac{\mu_\omega(\omega (t))\omega (t)}{r}}{m} - \omega(t) u(t). \label{eq:v_dot}
\end{flalign}
\end{subequations}

Given the  parameters $ \alpha_u, \alpha_\omega, d_u$,  
and $ d_\omega $ we reconstruct the required propeller force and steering angle from Eq. ~\eqref{eq:F_cos}-\eqref{eq:F_sin},
\begin{align*}
    F &= \pm \sqrt{(\alpha_u + d_u) ^ 2 + (\alpha_\omega + d_\omega) ^ 2}, \\
   \eta &= \operatorname{atan2} \left(\frac{\alpha_\omega + d_\omega}{F}, \frac{\alpha_u + d_u}{F}\right),
\end{align*}
where the sign of $F$ is selected to be positive, so that the steering angle stays within $ -\frac{\pi}{2} \leq\eta \leq \frac{\pi}{2} $, ensuring the propeller continues to push from behind the hull.

% ------------------------------------------------------------

%\subsubsection{Controller and Reference Design}
% \label{sec:Controller-Design}

Now we can address the two linear systems (Eq. \eqref{eq:u_w_dot})  with the virtual control terms $\alpha_u$ and $\alpha_\omega$ as input, and we can design those terms to achieve a desired behavior. For the angular velocity $\dot{\theta}$, we choose to control the orientation, so the plant is of the form of a double integrator:
\[
    \ddot{\theta}(t) = \gamma_\omega \, \alpha_\omega(t), \quad \gamma_\omega \, = \,  \frac{r}{I} > 0.
\]
For the surge $u$, we choose to control the speed, so the plant is also of the form of a single integrator:
\[
    \dot{u}(t) = \gamma_u \, \alpha_u(t), \quad \gamma_u \, = \, \frac{1}{m} > 0.
\]

In terms of control in the Laplace domain, denote by $\Theta(s)$ and $U(s)$ the Laplace transforms of $\theta(t)$ and $u(t)$, respectively. In addition, $\alpha_\omega(s)$ and $\alpha_u (s)$ are the Laplace transforms of  $\alpha_\omega(t)$ and $\alpha_u(t)$, respectively. Hence, the above equations can be written as 

\begin{equation}
   \label{eq:yaw_plant}
   \frac{\Theta (s)}{\alpha_{\omega}(s)}=\frac{\gamma_{\omega}}{s^2}    
\end{equation}
\begin{equation}
    \label{eq:surge_plant}
    \frac{U(s)}{\alpha_u (s)} = \frac{\gamma_u}{s}
\end{equation}

Both plants in Equations ~\eqref{eq:yaw_plant}, \eqref{eq:surge_plant} can be controlled in a closed loop with a lead controller
\begin{equation}
\label{eq:LeadController}
    C(s) = K_p \cdot \frac{\sqrt{\beta} \cdot s + \Omega_c}{s + \sqrt{\beta} \cdot \Omega_c}
\end{equation}

Here, $\Omega_c$ denotes the desired crossover frequency, setting the transient speed. 
We choose $K_p = \frac{\Omega_c^2}{\gamma}$ to set the desired crossover frequency, and set
$\beta > 1$ to provide the required phase lead (phase margin). In addition, we implement this controller via a normalized feedback form, which preserves the stability and improves the transient response (See App.~\ref{app:NormalizedLeadController}).

\subsection{Theoretical guarantees of feedback-linearization controller}
\label{sec:Guarantees}
We prove that the tracking error of the \fbl controller tends to zero.  

\begin{theorem} [Asymptotic stability]
\label{thm:asymp-stability}
Consider the feedback-linearized closed-loop dynamics in Equations~\eqref{eq:Linear_EOMs},
% \begin{align}
% \dot u(t) &= \frac{1}{m}\,\alpha_u(t), \label{eq:lin_u}\\
% \dot\theta(t) &= \omega(t), \label{eq:kin_theta}\\
% \dot\omega(t) &= \frac{r}{I}\,\alpha_\omega(t), \label{eq:lin_omega}\\
% \dot v(t) &= \frac{-\alpha_\omega(t)-\mu_t(v(t))v(t)-\mu_\omega(\omega(t))\omega(t)/r}{m}
%             -\omega(t)u(t). \label{eq:lin_v}
% \end{align}
with the virtual inputs $\alpha_u,\alpha_\omega$ produced by the (normalized)
lead controllers
\begin{align}
C_u(s) &= K_{p_u}\,\frac{\sqrt{\beta_u}\,s+\Omega_{c_u}}{s+\sqrt{\beta_u}\,\Omega_{c_u}},
&K_{p_u}=\frac{\Omega_{c_u}}{\gamma_u},\quad \gamma_u=\frac{1}{m}, \label{eq:Cu}\\
C_\omega(s) &= K_{p_\omega}\,\frac{\sqrt{\beta_\omega}\,s+\Omega_{c_\omega}}{s+\sqrt{\beta_\omega}\,\Omega_{c_\omega}},
&K_{p_\omega}=\frac{\Omega_{c_\omega}^{2}}{\gamma_\omega},\quad \gamma_\omega=\frac{r}{I}. \label{eq:Cw}
\end{align}
% implemented in the normalized-feedback form of
% Section~\ref{sec:normalized-lead_controller} (which preserves the
% closed-loop characteristic polynomials).
Additionally,  assume the following:
\begin{enumerate}
\item[(A1)] the values $u_{\rm ref}$ and $\theta_{\rm ref}$ are constants;
\item[(A2)] the conditions $\Omega_{c_u}>0$, $\beta_u>0$, $\Omega_{c_\omega}>0$, and $\beta_\omega>1$ are met; 
%(so the surge and yaw characteristic polynomials proven in
%\ref{Characteristic_polynomial} are Hurwitz);
\item[(A3)] the sway-drag satisfies $\mu_t(v)\ge \underline\mu>0$ for all $v\neq 0$;
\item[(A4)] the boom-tension $\mathbf f_l$ and drag terms $\mu_l(u) \cdot u, \, \mu_t(v) \cdot v, \, \mu_\omega(\omega) \cdot \omega$ 
are bounded.
\end{enumerate}
Then, for any initial condition, the tracking errors satisfy
$
u(t)-u_{\mathrm{ref}}\to 0, \theta(t)-\theta_{\mathrm{ref}}\to 0$, and  $v(t)\to 0$, $
\qquad\text{as }t\to\infty$.
\end{theorem}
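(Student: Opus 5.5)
The proof splits into three parts: two linear subsystems, then a cascade argument for the sway state. By construction of the feedback linearization, the surge channel $\dot u=\gamma_u\alpha_u$ and the yaw channel $\ddot\theta=\gamma_\omega\alpha_\omega$ are exactly linear and decoupled from $v$ and $\mathbf f_l$. The lumped terms $d_u,d_\omega$ cancel the tension and drag contributions, and (A4) guarantees that this cancellation is well defined with bounded actuation.

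\emph{Surge loop.} Under (A1), define $e_u=u_{\rm ref}-u$, so $\dot e_u=-\gamma_u\alpha_u$, with $\alpha_u$ the output of $C_u$ driven by $e_u$. The open loop is $L_u(s)=\frac{\Omega_{c_u}(\sqrt{\beta_u}s+\Omega_{c_u})}{s(s+\sqrt{\beta_u}\Omega_{c_u})}$, so the closed-loop characteristic polynomial is
\[
s^2+\sqrt{\beta_u}\,\Omega_{c_u}(1+\Omega_{c_u})\,s+\Omega_{c_u}^2 .
\]
Its coefficients are all positive under (A2), so it is Hurwitz. The integrator in the plant gives zero steady-state error to a constant reference, hence $e_u\to0$ exponentially.

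\emph{Yaw loop.} The open loop is $\frac{\Omega_{c_\omega}^2(\sqrt{\beta_\omega}s+\Omega_{c_\omega})}{s^2(s+\sqrt{\beta_\omega}\Omega_{c_\omega})}$, giving the characteristic polynomial
\[
s^3+\sqrt{\beta_\omega}\,\Omega_{c_\omega}s^2+\sqrt{\beta_\omega}\,\Omega_{c_\omega}^2 s+\Omega_{c_\omega}^3 .
\]
All coefficients are positive. The Routh--Hurwitz condition $a_2a_1>a_0$ reads $\beta_\omega\Omega_{c_\omega}^3>\Omega_{c_\omega}^3$, i.e.\ $\beta_\omega>1$, which is precisely (A2). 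The normalized implementation of App.~\ref{app:NormalizedLeadController} has the same characteristic polynomial, so the argument transfers to it. The type-2 loop then gives $\theta-\theta_{\rm ref}\to0$ exponentially, and also $\omega=\dot\theta\to0$ and $\alpha_\omega\to0$ exponentially, since all closed-loop states decay.

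\emph{Sway state (main obstacle).} I treat Eq.~\eqref{eq:v_dot} as the scalar system
\[
\dot v=-\tfrac{1}{m}\mu_t(v)v+w(t),\qquad w=-\tfrac{1}{m}\bigl(\alpha_\omega+\mu_\omega(\omega)\omega/r\bigr)-\omega u .
\]
The input $w$ decays to zero. Indeed, $\alpha_\omega\to0$ and $\omega\to0$ exponentially, $\mu_\omega(\omega)\omega=\kappa_\omega|\omega|\omega\to0$, and $u$ is bounded because it converges. Take $V=\tfrac12v^2$. Using (A3),
\[
\dot V\le-\tfrac{\underline\mu}{m}v^2+|v||w| .
\]
With $|v|=\sqrt{2V}$ this gives $\tfrac{d}{dt}|v|\le-\tfrac{\underline\mu}{m}|v|+|w|$ wherever $v\neq0$. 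Comparison then yields
\[
|v(t)|\le e^{-\underline\mu t/m}|v(0)|+\int_0^t e^{-\underline\mu(t-s)/m}|w(s)|\,ds .
\]
The convolution of an exponentially stable kernel with an input tending to zero tends to zero, so $v\to0$.

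The delicate points are in this sway step. First, one must check that $w$ really vanishes: the $\omega u$ coupling needs $u$ bounded, which the surge result supplies. Second, (A3) must hold uniformly near $v=0$. For the literal quadratic drag $\kappa_t|v|$, (A3) fails near zero, and I would fall back on $\dot V\le-\tfrac{\kappa_t}{m}|v|^3+|v||w|$. This still forces $v\to0$, since whenever $|v|>\epsilon$ eventually $|w|<\kappa_t\epsilon^2/(2m)$, so the bracket is negative; the rate becomes merely asymptotic rather than exponential. The theorem's hypothesis lets me use the linear-rate argument directly.
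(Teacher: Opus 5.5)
Your proof is essentially the paper's: Routh--Hurwitz on the two feedback-linearized loops (your yaw polynomial and the $\beta_\omega>1$ condition match it exactly), followed by a Lyapunov/comparison argument with $V\propto v^2$ for the sway, where you usefully derive $w\to0$ from the surge and yaw results rather than assuming it as the paper's (S2) does, and your caveat that $\mu_t(v)=\kappa_t|v|$ violates (A3) is correct (the paper's appeal to non-negativity of $\mu_t$ does not supply $\underline\mu$). The one slip is the surge polynomial, which with $K_{p_u}=\Omega_{c_u}/\gamma_u$ is $s^2+2\sqrt{\beta_u}\,\Omega_{c_u}s+\Omega_{c_u}^2$ (your $s$-coefficient $\sqrt{\beta_u}\,\Omega_{c_u}(1+\Omega_{c_u})$ is not even dimensionally consistent), but its coefficients are still positive, so the Hurwitz conclusion is unaffected.
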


Assumption A2 is solely based on the formation of a lead controller, and Assumption A3 is based on the non-negativity of the drag function $\mu_t(v)$ as defined in Sec. ~\ref{sec:BoatDynamics}.
We obtain the proof through the following claims. 

%\begin{proof} We break the argument into three claims.

\begin{claim}[Surge and yaw tracking are exponentially stable]
\label{clm:uy_exp}
Under Assumptions (A1)-(A2), the surge loop $(u,\alpha_u)$ and the yaw loop $(\theta,\alpha_\omega)$
are exponentially stable about the equilibrium induced by the  constant references $u_{ref}$ and $\theta_{ref}$. 
% , i.e., for some constants $c_1,c_2>0$ and time constants $T_1,T_2>0$,  both the surge velocity error and the orientation error can be expressed as follows:
% \[|u(t)-u_{ref}| \leq c_1 \cdot e^{-t/T_1}, |\theta(t)-\theta_{ref}| \leq c_2 \cdot e^{-t/T_2}.\]
\end{claim}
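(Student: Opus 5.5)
The plan is to use the fact that, after feedback linearization, the surge and yaw loops in Eq.~\eqref{eq:u_w_dot} are linear time-invariant and decoupled from the sway state $v$. Claim~\ref{clm:uy_exp} then reduces to two separate LTI problems. For each, I would compute the closed-loop characteristic polynomial, check that it is Hurwitz under (A2), and conclude exponential stability of an error realization whose equilibrium is the constant reference.

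\emph{Surge loop.} The loop gain is $L_u(s)=C_u(s)\gamma_u/s$ with $K_{p_u}=\Omega_{c_u}/\gamma_u$, so
\[
L_u(s)=\frac{\Omega_{c_u}(\sqrt{\beta_u}\,s+\Omega_{c_u})}{s\,(s+\sqrt{\beta_u}\,\Omega_{c_u})}.
\]
The characteristic polynomial $s(s+\sqrt{\beta_u}\Omega_{c_u})+\Omega_{c_u}(\sqrt{\beta_u}s+\Omega_{c_u})$ simplifies to
\[
s^2+2\sqrt{\beta_u}\,\Omega_{c_u}\,s+\Omega_{c_u}^2 .
\]
This second-order polynomial has positive coefficients whenever $\Omega_{c_u},\beta_u>0$, so it is Hurwitz.

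\emph{Yaw loop.} With plant $\gamma_\omega/s^2$ and $K_{p_\omega}=\Omega_{c_\omega}^2/\gamma_\omega$, the characteristic polynomial $s^2(s+\sqrt{\beta_\omega}\Omega_{c_\omega})+\Omega_{c_\omega}^2(\sqrt{\beta_\omega}s+\Omega_{c_\omega})$ simplifies to
\[
s^3+\sqrt{\beta_\omega}\,\Omega_{c_\omega}\,s^2+\sqrt{\beta_\omega}\,\Omega_{c_\omega}^2\,s+\Omega_{c_\omega}^3 .
\]
All coefficients are positive. The Routh--Hurwitz condition $a_2a_1>a_0$ reads $\beta_\omega\Omega_{c_\omega}^3>\Omega_{c_\omega}^3$, which is exactly $\beta_\omega>1$ in (A2). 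Hence this polynomial is also Hurwitz. I would invoke App.~\ref{app:NormalizedLeadController} for the fact that the normalized feedback implementation has the same characteristic polynomials, so the conclusion transfers to the implemented controller.

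\emph{From Hurwitz polynomials to exponential stability about the reference.} For each loop I would write a minimal state-space realization in error coordinates. For surge the state is $(e_u,z_u)$ with $e_u=u-u_{\rm ref}$ and $z_u$ the lead-filter state. For yaw the state is $(e_\theta,\omega,z_\omega)$ with $e_\theta=\theta-\theta_{\rm ref}$. By (A1), $\dot u_{\rm ref}=\dot\theta_{\rm ref}=0$, so the error dynamics are autonomous, $\dot\xi=A\xi$. The matrix $A$ has characteristic polynomial equal to the polynomial computed above, because the realization is minimal: the lead zero $-\Omega_c/\sqrt\beta$ does not cancel the plant poles at the origin.

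It remains to confirm that the equilibrium is at zero error. Since the plant contains an integrator (surge) or a double integrator (yaw), any equilibrium must have $\alpha=0$. That forces the controller output, and hence its input error, to vanish. So the equilibrium is $u=u_{\rm ref}$, $\theta=\theta_{\rm ref}$, $\omega=0$, $\alpha_u=\alpha_\omega=0$. Since $A$ is Hurwitz, $\|\xi(t)\|\le c\,e^{-\lambda t}\|\xi(0)\|$ for some $c,\lambda>0$, which gives exponential stability of both loops.

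\emph{Main obstacle.} I expect the hardest step to be the bookkeeping in the last part: showing that the normalized implementation and the chosen error realization preserve both the characteristic polynomial and the equilibrium location. In particular, the reference must enter in a way that does not shift the steady state of the lead filter. I would handle this by writing the normalized form explicitly and checking it against the appendix. The Hurwitz computations themselves are routine.
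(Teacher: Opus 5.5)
Your proposal is correct and follows essentially the same route as the paper. You compute the same closed-loop characteristic polynomials from $1+L(s)=0$, check coefficient positivity for the surge quadratic, and use the Routh--Hurwitz condition $a_2a_1>a_0\iff\beta_\omega>1$ for the yaw cubic. Your extra error-coordinate and equilibrium bookkeeping is something the paper simply asserts. One small correction there: minimality is not what you need, and it actually fails when $\beta_u=1$, which (A2) allows, because then the lead's zero and pole coincide. What you need is $\det(sI-A_{\mathrm{cl}})=d_Pd_C+n_Pn_C$, and this holds for any realizations with $\det(sI-A_P)=d_P$ and $\det(sI-A_C)=d_C$.
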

\begin{proof}
\label{Surge-Yaw-Proof}
We derive the characteristic polynomials for the surge and yaw loops and prove that they are Hurwitz under (A1) and (A2). This implies that the corresponding linear closed-loop dynamics are exponentially stable and track constant references with zero steady-state error.

For the surge loop, with open-loop transfer function \(L_u(s)=C_u(s)P_u(s)\), the
closed-loop characteristic equation \(1+L_u(s)=0\) yields
\[
\chi_u(s)=s^{2}+2\sqrt{\beta_u}\,\Omega_{c_u}s+\Omega_{c_u}^{2}.
\]
For a quadratic \(p(s)=a_2 s^2+a_1 s+a_0\), the Routh-Hurwitz criterion \cite{Routh1877} reduces to
\(a_2>0,\;a_1>0,\;a_0>0\). Here \(a_2=1\), \(a_1=2\sqrt{\beta_u}\,\Omega_{c_u}\),
and \(a_0=\Omega_{c_u}^{2}\), which are strictly positive for \(\Omega_{c_u}>0\)
and \(\beta_u>0\); hence the surge closed-loop polynomial is Hurwitz
\cite{Hurwitz1895}.

For the yaw loop, with \(L_\omega(s)=C_\omega(s)P_\omega(s)\), the closed-loop
equation \(1+L_\omega(s)=0\) gives the characteristic polynomial
\[
\chi_\omega(s)=
s^{3}
+ \sqrt{\beta_\omega}\,\Omega_{c_\omega} s^{2}
+ \sqrt{\beta_\omega}\,\Omega_{c_\omega}^{2} s
+ \Omega_{c_\omega}^{3}.
\]
For a cubic \(s^{3}+a_{2}s^{2}+a_{1}s+a_{0}\), the Routh--Hurwitz conditions are
\(a_{2}>0,\;a_{1}>0,\;a_{0}>0,\;a_{2}a_{1}>a_{0}\). %\cite{Routh1877,Hurwitz1895}.
Here
\[
a_{2}=\sqrt{\beta_\omega}\,\Omega_{c_\omega},\quad
a_{1}=\sqrt{\beta_\omega}\,\Omega_{c_\omega}^{2},\quad
a_{0}=\Omega_{c_\omega}^{3},
\]
which are positive for \(\Omega_{c_\omega}>0\) and \(\beta_\omega>0\), and
\[
a_{2}a_{1}=\beta_\omega\,\Omega_{c_\omega}^{3}> \Omega_{c_\omega}^{3}=a_{0}
\quad\Longleftrightarrow\quad \beta_\omega>1.
\]
% Therefore, under \(\Omega_{c_u}>0,\beta_u>0\) and \(\Omega_{c_\omega}>0,\beta_\omega>1\),
% both the surge and yaw closed-loop characteristic polynomials are Hurwitz, and the
% corresponding tracking loops are asymptotically stable.
\end{proof}
% \emph{Justification.} The characteristic polynomials derived in
% \ref{Characteristic_polynomial} are Hurwitz under (A2); 

Given that the conditions in Claim \ref{clm:uy_exp} are satisfied, we can now address the third DOF of the sway velocity $v(t)$.

\begin{claim}[Sway stability under exponentially decaying disturbance]
\label{clm:sway_gas}
Consider the sway dynamics under feedback linearization:
\begin{equation}
\label{eq:sway_claim_dynamics}
\dot v
= -\frac{1}{m}\,\mu_t(v)\,v + \phi(t),
\end{equation}
where
\begin{equation}
\label{eq:sway_phi_def}
\phi(t)
\;=\;
\frac{-\alpha_\omega(t)\;-\;\mu_{\omega}(\omega(t))\,\omega(t)/r}{m}
\;-\;\omega(t)\,u(t).
\end{equation}
Assume:
\begin{enumerate}
\item[(S1)] There exists $\underline\mu>0$ such that $\mu_t(v)\ge \underline\mu$ for all $v\neq 0$;
\item[(S2)] The function $\phi(t)$ decays exponentially: there exist $\bar\phi>0$ and $\lambda>0$ such that
\begin{equation}
\label{eq:sway_phi_decay}
|\phi(t)| \le \bar\phi e^{-\lambda t},\qquad \forall\, t\ge 0.
\end{equation}
\end{enumerate}
Then $v(t)\to 0$ as $t\to\infty$. Moreover, $v(t)$ converges exponentially.
\end{claim}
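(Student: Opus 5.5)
We would treat \eqref{eq:sway_claim_dynamics} as a scalar nonautonomous ODE driven by the decaying input $\phi$, and work with the Lyapunov function $V(v)=\tfrac12 v^2$. Along solutions,
\[
\dot V = -\frac{1}{m}\mu_t(v)\,v^2 + v\,\phi(t) \le -\frac{\underline\mu}{m}v^2 + |v|\,|\phi(t)|,
\]
where (S1) is used for $v\neq 0$; at $v=0$ the drag term vanishes, so the inequality holds for all $v$. Setting $a:=\underline\mu/m>0$, the natural quantity to study is $|v|$ itself, for which this reads (in the sense of the upper Dini derivative, which avoids the non-differentiability of $|v|$ at $0$) $D^+|v| \le -a|v| + |\phi(t)| \le -a|v| + \bar\phi e^{-\lambda t}$.

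The next step is a comparison-lemma argument: $|v(t)|\le w(t)$, where $w$ solves the linear ODE $\dot w=-aw+\bar\phi e^{-\lambda t}$ with $w(0)=|v(0)|$. Variation of constants gives
\[
w(t)=e^{-at}|v(0)| + \bar\phi\int_0^t e^{-a(t-s)}e^{-\lambda s}\,ds .
\]
The integral equals $\bar\phi\,(e^{-\lambda t}-e^{-at})/(a-\lambda)$ when $a\neq\lambda$, and $\bar\phi\, t e^{-at}$ when $a=\lambda$. In either case we obtain $|v(t)|\le C e^{-\sigma t}$ for any $0<\sigma<\min(a,\lambda)$ and a suitable constant $C$ depending on $|v(0)|$, $\bar\phi$, $a$, $\lambda$. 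This yields both $v\to 0$ and the exponential rate.

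\textbf{Well-posedness.} We must also confirm that solutions exist for all $t\ge0$. Since $\mu_t(v)v=\kappa_t|v|v$ is locally Lipschitz, solutions exist locally, and the bound above precludes finite escape time. If (S1) were relaxed to hold only away from the origin, one could instead note that $\mu_t(v)v\cdot v\ge 0$ and use a comparison argument that separates the large-$|v|$ regime from the decay of $\phi$. As stated, however, (S1) gives a uniform linear bound.

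\textbf{Main obstacle.} The key difficulty is handling $D^+|v|$ rigorously near $v=0$. We would sidestep it by applying the comparison lemma to $V$: $\dot V\le -2aV+\sqrt{2V}\,|\phi|$, and then substitute $W=\sqrt{V}$ wherever $V>0$. Alternatively, we can bound $|v\phi|\le \tfrac a2 v^2+\tfrac1{2a}\phi^2$, which gives $\dot V\le -aV+\tfrac{\bar\phi^2}{2a}e^{-2\lambda t}$. This is a clean linear differential inequality, and it yields $V(t)\le C'e^{-\min(a,2\lambda)t/2\cdot 2}$ up to the resonant case, which again produces exponential decay of $v$. We would present this Young's-inequality route as the main argument because it is fully smooth.
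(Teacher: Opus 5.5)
Your proof is correct. It shares its setup with the paper's: a quadratic Lyapunov function (the paper takes $\tfrac12 m v^2$) and the inequality $\dot V\le -2aV+\sqrt{2V}\,|\phi(t)|$ with $a=\underline\mu/m$.

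The closing step differs. The paper feeds this inequality into Khalil's ISS theorem to get $|v(t)|\le\beta(|v(0)|,t)+\gamma(\sup_{0\le\tau\le t}|\phi(\tau)|)$, and then simply asserts an exponential bound. You instead compare $|v|$ (or $V$, after Young's inequality) with an explicit linear scalar ODE and solve it by variation of constants.

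Your route is more elementary, and it avoids two weak points in the paper's argument:
\begin{itemize}
\item The paper asserts $\sup_{0\le\tau\le t}|\phi(\tau)|\to0$. This is false: the supremum is nondecreasing in $t$ and at least $|\phi(0)|$. So the ISS bound as written gives only ultimate boundedness, unless one restarts it at time $t/2$ or uses the asymptotic-gain property.
\item The paper's final estimate $k_1|v(0)|e^{-ct}+k_2\bar\phi e^{-\lambda t}$ fails when $\lambda>a$. Take $\mu_t\equiv\underline\mu$, $v(0)=0$, and $\phi=\bar\phi e^{-\lambda t}$; then $v(t)=\bar\phi(e^{-at}-e^{-\lambda t})/(\lambda-a)$, which decays only like $e^{-at}$.
\end{itemize}
Your convolution formula weights recent inputs automatically. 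It gives the correct rate $\min(a,\lambda)$, minus an arbitrary $\epsilon$ at resonance. The ISS machinery would buy generality for nonlinear gains, but for this scalar problem your direct comparison is the tighter argument. The Young's-inequality variant gives $V(t)\le C'e^{-\min(a,2\lambda)t}$ off resonance, hence only the rate $\min(a/2,\lambda)$ for $v$. That is a harmless loss, paid in exchange for avoiding Dini derivatives.

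One caution on your well-posedness remark. The drag law $\mu_t(v)=\kappa_t|v|$ that you cite is incompatible with (S1), since $\kappa_t|v|\to0$ as $v\to0$. Under (S1) as stated, you should simply assume $v\mapsto\mu_t(v)v$ is locally Lipschitz. If you relaxed (S1) to accommodate that quadratic drag, you would still get $v\to0$, but not an exponential rate. With $\phi\equiv0$ the solution is $v(t)=v_0/(1+\kappa_t|v_0|t/m)$, which decays only like $1/t$.
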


\begin{proof}
Choose the quadratic Lyapunov function
\[
V(v)=\tfrac12\,m\,v^2,
\]
which is positive definite and radially unbounded. Differentiating along
Equation \eqref{eq:sway_claim_dynamics} yields
\[
\dot V
= m v \dot v
= -\mu_t(v)\,v^2 + m v \phi(t)
\le -\underline\mu v^2 + m|v|\cdot |\phi(t)|.
\]
Using $v^2=\tfrac{2}{m}V$ and $|v|=\sqrt{\tfrac{2}{m}}\sqrt{V}$ yields the ISS-type bound
\begin{equation}
\label{eq:sway_iss_ineq}
\dot V
\le
-\frac{2\underline\mu}{m}\,V
+\sqrt{2m}\,\sqrt{V}\,|\phi(t)|.
\end{equation}
By standard ISS comparison arguments (e.g., Proposition~4.19 in
Khalil~\cite{Khalil2002NonlinearSystems}), there exist class-$\mathcal{KL}$ and
class-$\mathcal{K}$ functions $\beta$ and $\gamma$ such that
\begin{equation}
\label{eq:sway_iss_bound}
|v(t)|
\le
\beta\!\bigl(|v(0)|,t\bigr)
+
\gamma\!\Bigl(\sup_{0\le \tau\le t}|\phi(\tau)|\Bigr),
\qquad \forall\, t\ge 0.
\end{equation}
Using Eq. \eqref{eq:sway_phi_decay}, we have
$\sup_{0\le \tau\le t}|\phi(\tau)| \le \bar\phi$ for all $t$, and
$\sup_{0\le \tau\le t}|\phi(\tau)| \to 0$ as $t\to\infty$.
Therefore, \eqref{eq:sway_iss_bound} implies $|v(t)|\to 0$. Furthermore, combining Equations \eqref{eq:sway_phi_decay} and \eqref{eq:sway_iss_bound},
yields an explicit exponential estimate of the form
\[
|v(t)|
\le
k_1 |v(0)|e^{-c t}
+
k_2 \bar\phi e^{-\lambda t},
\qquad k_1,k_2,c>0,
\]
so $v(t)$ converges exponentially to zero.
\end{proof}

Claims~\ref{clm:uy_exp} and  \ref{clm:sway_gas} imply
$u(t)\to u_{\rm ref}$, $\theta(t)\to\theta_{\rm ref}$, and $v(t)\to 0$, completing the proof of Theorem \ref{thm:asymp-stability}.

}{\section{Boom-Towing ASV Duo Control}
\label{sec:feedback}
In this section, we introduce our paradigm for controlling individual ASV duos to execute the solution derived by the routing layer (Sec.~\ref{sec:routing}) to address Problem \ref{prob:control}.

\subsection{Coupled Vessel–Boom Dynamics}
\label{sec:Dynamics}
We first describe the dynamics of an ASV duo towing a boom. 
Building on simplified single-ASV models ~\cite{Fossen2011}, we capture boom effects as an external load applied at each vessel’s tow point. 

\begin{wrapfigure}{R}{0.4\columnwidth}
   \centering
   \vspace{-10pt}
\includegraphics[width=0.4\columnwidth]
{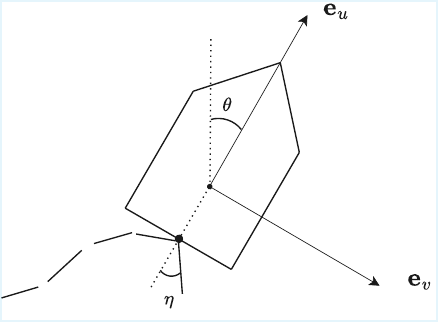}
\end{wrapfigure}
%On the right, the body-fixed 
The body-frame surge–sway axes are centered at the vessel CoM, with
surge and sway velocities $u$ and $v$. 
The inputs are the thrust $F$ and steering (rudder) angle $\eta$. The planar rigid-body EoM are:  
\begin{subequations}
\label{eq:disturbance-hull}
\begin{align}
\dot{u} &= \frac{F\cos\eta \;-\;\mu_{l}(u)\,u \; + \; \mathbf{f}_l \cdot \mathbf{e_u}}{m}
          \;+\; \omega\,v, \label{eq:bare-hull-u_d} \\[4pt]
\dot{v} &= \frac{-\,F\sin\eta \;-\;\mu_{t}(v)\,v \; + \; \mathbf{f}_l \cdot \mathbf{e_v}}{m}
          \;-\; \omega\,u, \label{eq:bare-hull-v_d} \\[4pt]
\dot{\omega} &= \frac{r\,F\sin\eta \;-\;\mu_{\omega}(\omega)\,\omega \; - \; r \cdot \mathbf{f}_l \cdot \mathbf{e_v}}{I}.
          \label{eq:bare-hull-w_d}
\end{align}
\end{subequations} 
Here, %\(\omega = \dot{\theta}\) 
$\omega$
is the yaw rate, $m$ is the mass, $I$ is the planar moment of inertia about the CoM, \(r\) is the propeller offset from the CoM, and \(\mu_{l},\mu_{t},\mu_{r}\) are the nonlinear drag coefficients that depend on the corresponding speed components, respectively: $
    \mu_* (f(t)) = \kappa_* \cdot |f(t)|$ for some $\kappa_* > 0$. %where $i \in \{l, t, r\}$.
% $ \mu_i (f(t)) = \kappa_i \cdot |f(t)|, \quad \kappa_i > 0, \quad i \in \{l, t, r\}$. 
The cross-coupling terms \(\omega v\) and \(-\omega u\) %in Eq.~\eqref{eq:bare-hull-v_d} and~\eqref{eq:bare-hull-u_d} 
represent Coriolis effects in the body frame.

The vector $\mathbf f_l$ denotes the measurable boom tension at the vessel's tow point, expressed in the body frame. 
In simulation, the boom is modeled as a planar articulated chain of identical rigid links connected by pin joints with a spring-damper model.

%Figure~\ref{fig:link-frame} describes the tangential-normal frame ($\mathbf{t,n}$) fixed to each link’s CoM.

% \niceparagraph{Discussion.} We discuss the assumptions underlying the above model and its limitations. 
% %The model is restricted to planar motion. %In particular, heave, roll, and pitch are neglected, as are other out-of-plane boom deformations. 
% Hydrodynamic loads are represented via lumped, speed-dependent drag terms, while unmodeled environmental effects (e.g., wind, currents, and higher-order fluid-structure interaction) are neglected. The boom-vessel coupling enters the vessel EoM only through the tension vector $\mathbf f_l$, which we assume is measurable at the attachment point (e.g., via a load cell). 
% (We emphasize that in the experimental results, we simulate the dynamics of the full model including all the individual links.) 

\subsection{From Routing to Control}
\label{sec:SetpointsTracking}
%In this section, we describe our approach for feedback control for a ASV duo. %We first discuss the dynamics of our system, then introduce our controller, and conclude this section with a theoretical evaluation. 

% \subsection{Control design}
% \label{sec: Feedback-Linearization-Control}

% Next, we leverage the structure of our dynamical model, specifically Equations~\eqref{eq:disturbance-hull}, to design a minimalist feedback controller. Our control challenge is to track a planned trajectory in a coupled \kiril{coupled in what sense?} nonlinear system. \kiril{Be more precise about the problem we consider here, e.g., point tracking and formally describe the controller input}
% We assume that we know each vessel's dynamical parameters ($m, I, r$ in \ref{eq:bare-hull}). We also assume to know the external forces and torques applied on each vessel, due to friction with the water ($\mu_i(f(t))$, also in \ref{eq:bare-hull}) and the tension force from the boom. Robust/adaptive control under model uncertainty is beyond the scope of this work . \kiril{discuss whether this is a reasonable assumption. Also, for clarity, state the variables for the forces, etc.}
% Our objective is \emph{pose tracking} of each vessel in the horizontal plane, i.e., to drive $(x_i(t),y_i(t),\psi_i(t))$ to a time-parameterized reference $(x_i^\star(t),y_i^\star(t),\psi_i^\star(t))$.

Next, we leverage the structure of our dynamical model (Eq.~\eqref{eq:disturbance-hull}) to design feedback controllers.
Our objective is \emph{path following} of each vessel $i\in \{1,2\}$ in the horizontal plane, i.e., to drive $(x_i(t),y_i(t),\theta_i(t))$ to a path reference.  %$(x_i^\star(t),y_i^\star(t),\theta_i^\star(t))$.
The system is nonlinear and \emph{dynamically coupled} in two senses: (i) surge-sway-yaw DoFs are coupled through the rigid-body equations (Eq.~\eqref{eq:disturbance-hull}), and (ii) vessel and boom dynamics are coupled through the measured boom tension $\mathbf f_{l}$ applied at the tow point. The controller outputs are the thrust force $F_i$ and steering angle $\eta_i$. %, as they appear in Eq. \eqref{eq:disturbance-hull}.

To use such a controller for \emph{path following}, we convert the route obtained from the solution of the DMR problem into reference signals for the controlled DoFs (surge velocity $u$ and yaw/orientation $\theta$). Concretely, we discretize the path into a sequence of setpoints (Fig. ~\ref{fig:setpoints-trajectory}), producing piecewise-constant reference commands, where the two towing vessels are commanded to follow \textit{offset} paths whose separation is strictly smaller than the boom length.

When one vessel reaches its setpoint, its surge reference is set to zero so it holds position until the other vessel arrives, preventing the inter-vessel distance from exceeding the boom length. Once both vessels are aligned, their surge references are set to a constant cruising value.  This strategy motivates regulating surge velocity rather than position: setting  $u_{\mathrm{ref}}=0$ avoids aggressive orientation corrections that could violate the boom constraint. In contrast, direct position control may generate large transients and actuator saturation for large errors, whereas bounded velocity references limit this effect.

\subsection{Controller design}
\label{sec:Controller-Definition}
To enforce the above path-tracking rationale, we consider two approaches.

\niceparagraph{PID.} As a baseline, we implement a standard \pid controller for an individual boat with the following structure:
\begin{align}
F(s) &= \left(K_{p,u} + K_{i,u}\frac{1}{s} + \frac{K_{d,u}s}{\tau_{u}s + 1}\right) E_u(s), \\
\eta(s) &= \left(K_{p,\theta} + K_{i,\theta}\frac{1}{s} + \frac{K_{d,\theta}s}{\tau_{\theta}s + 1}\right) E_\theta(s),
\end{align}
Here, $F(s)$ and $\eta(s)$ are the Laplace transforms of the propulsion force and steering angle, and $E_u(s)$ and $E_\theta(s)$ are those of the surge velocity and orientation errors, respectively.
Each actuator is tuned by four parameters:
$K_p$, $K_i$, and $K_d$ set the proportional, integral, and derivative actions, and $\tau$ is for the derivative low-pass filter (for causality). 

\niceparagraph{Feedback linearization.} As a model-aware approach, we employ feedback linearization (\fbl)  with virtual control terms~\cite{Khalil2002NonlinearSystems} to decouple the nonlinear 3-DoF vessel dynamics into two manageable linear DoFs: surge velocity $u$ and yaw orientation $\theta$. 
Both of these DoFs are controlled via a lead controller. The third DoF, sway velocity $v$, is left uncontrolled. From a theoretical standpoint  this controller guarantees, under piecewise constant references,
$
u(t)-u_{\mathrm{ref}}\to 0, \theta(t)-\theta_{\mathrm{ref}}\to 0$, and  $v(t)\to 0$, as $t\to\infty$ (proof omitted). Moreover, as we will see below, it requires only \emph{two} tuning parameters per vessel.

%and is proven to be asymptotically stable. %in \ref{sec:StabilityProof-Sway}.
%This approach provides theoretical stability guarantees while enabling independent control of each vessel, provided that the boom geometric constraints are satisfied through trajectory planning.

%(also known as state-feedback linearization): 
%In this approach, a nonlinear plant is algebraically transformed so that selected closed-loop channels obey linear error dynamics driven by auxiliary inputs (the virtual controls)~\cite{Khalil2002NonlinearSystems}. 

Starting from the surge $u$  and yaw $\omega$ equations of dynamics in Eq.~\eqref{eq:disturbance-hull}, the inputs $F(t)$ and $\eta(t)$, act through the body-frame components $F(t)\cos\eta(t)$ (surge) and $F(t)\sin\eta(t)$ (yaw). 
%
%\footnote{If the yaw equation uses a moment input, the corresponding term is $rF(t)\sin\eta(t)$; the decomposition below applies identically after including the lever arm $r$.}
We split each actuation component into a virtual control,  $\alpha_u(t)$ and $\alpha_\omega(t)$, and a lumped term, $d_u(t)$ and $d_\omega(t)$, capturing known/estimated disturbances in  Eq.~\eqref{eq:disturbance-hull}:
\begin{subequations}
\label{eq:force_steering_deconstruct}
\begin{flalign}
F(t)\cos(\eta(t)) & = \alpha_u(t) + d_u(t),   \label{eq:F_cos}\\%[4pt]
F(t)\sin(\eta(t)) & = \alpha_\omega(t) + d_\omega(t).   \label{eq:F_sin}
\end{flalign}
\end{subequations}

To make those equations explicit, we define the lumped terms $d_u(t)$ and $d_\omega(t)$ by collecting all non-actuation contributions from the surge and yaw equations (Eq.  \eqref{eq:disturbance-hull}), i.e., the projections of the boom tension and the hydrodynamic/coriolis terms. This yields
\begin{subequations}
\label{eq:disturbance_virtual_components}
\begin{flalign}
d_u(t) & = -\,\mathbf f_l(t)\!\cdot\!\mathbf e_u(t)
          + \mu_l\!\big(u(t)\big)\,u(t) - m\,\omega(t)\,v(t),  \label{eq:u_dist} \\  
d_\omega(t) & = \mathbf f_l(t)\!\cdot\!\mathbf e_v(t)
          + \dfrac{\mu_\omega\!\big(\omega(t)\big)\,\omega(t)}{r}. \label{eq:omega_dist}
\end{flalign}
\end{subequations}
Substituting Eq. \eqref{eq:force_steering_deconstruct} and \eqref{eq:disturbance_virtual_components} into Eq. \eqref{eq:disturbance-hull}, we obtain two linear and one nonlinear EoMs, respectively:
\begin{subequations}
\label{eq:Linear_EoMs}
\begin{flalign}
\dot{u}(t) & = \frac{\alpha_u(t)}{m},   \quad
\dot{\omega}(t) = \ddot{\theta}(t)  = \frac{r}{I} \cdot \alpha_\omega (t), \label{eq:u_w_dot} \\
\dot{v}(t) & = \frac{ -\alpha_\omega(t) - \mu_{t}(v(t))v(t) - \frac{\mu_\omega(\omega (t))\omega (t)}{r}}{m} - \omega(t) u(t). \label{eq:v_dot}
\end{flalign}
\end{subequations}

Given the  parameters $ \alpha_u, \alpha_\omega, d_u$,  %(see details on their computation in Sec.~\ref{sec:feedback:implementation}) 
and $ d_\omega $ %(computation shown in Eq.~\eqref{eq:disturbance_virtual_components}), 
we reconstruct the required propeller force and steering angle from Eq. ~\eqref{eq:F_cos}-\eqref{eq:F_sin},
\begin{align*}
    F &= \pm \sqrt{(\alpha_u + d_u) ^ 2 + (\alpha_\omega + d_\omega) ^ 2}, \\
   \eta &= \operatorname{atan2} \left(\frac{\alpha_\omega + d_\omega}{F}, \frac{\alpha_u + d_u}{F}\right),
\end{align*}
where the sign of $F$ is selected to be positive, so that the steering angle stays within $ -\frac{\pi}{2} \leq\eta \leq \frac{\pi}{2} $, ensuring the propeller continues to push from behind the hull.

% ------------------------------------------------------------

%\subsubsection{Controller and Reference Design}
% \label{sec:Controller-Design}

We now treat Eq.~\eqref{eq:u_w_dot} as two linear systems with inputs $\alpha_u$ and $\alpha_\omega$, and choose these virtual controls to achieve the desired behavior.
\iftoggle{arxiv}{
For the angular velocity $\dot{\theta}$, we choose to control the orientation, so the plant is of the form of a double integrator:
\[
    \ddot{\theta}(t) = \gamma_\omega \, \alpha_\omega(t), \quad \gamma_\omega \, = \,  \frac{r}{I} > 0.
\]
For the surge $u$, we choose to control the speed, so the plant is 
of the form of a single integrator:
\[
    \dot{u}(t) = \gamma_u \, \alpha_u(t), \quad \gamma_u \, = \, \frac{1}{m} > 0.
\]
In terms of control in the Laplace domain, denote by $\Theta(s)$ and $U(s)$ the Laplace transforms of $\theta(t)$ and $u(t)$, respectively. In addition, $\alpha_\omega(s)$ and $\alpha_u (s)$ are the Laplace transforms of  $\alpha_\omega(t)$ and $\alpha_u(t)$, respectively. Hence, the above equations can be written as 
\begin{equation}
   \frac{\Theta (s)}{\alpha_{\omega}(s)}=\frac{\gamma_{\omega}}{s^2}, \quad
    \frac{U(s)}{\alpha_u (s)} = \frac{\gamma_u}{s}.\label{eq:yaw_surge_plant}
\end{equation}
}
Those systems can be controlled in a closed loop with a lead controller for $u$ of the form
\begin{equation}
     C_u(s) = K_u \cdot \frac{\sqrt{\beta_u} \cdot s + \Omega_{c_u}}{s + \sqrt{\beta_u} \cdot \Omega_{c_u}}.\label{eq:lead_control}
\end{equation}
Here, $\Omega_{c_u}$ denotes the desired crossover frequency, setting the transient speed. 
We choose $K_u = \frac{\Omega_{c_u}^2}{\gamma}$ to set the desired crossover frequency, and set
$\beta_u > 1$ to provide the required phase lead (phase margin). A lead controller for $\omega$ is obtained in a similar manner.
}

\section{Experimental Evaluation}
In this section, we evaluate our routing (Sec.~\ref{sec:routing}) and path-tracking (Sec.~\ref{\iftoggle{arxiv}{sec:Controller-Definition}{sec:feedback}}) approaches. 

\label{sec:exper-eval}
\subsection{Damage Minimizing Routing Solution Evaluation}

%\niceparagraph{Experimental Setup.} 
To assess the routing framework, we generate synthetic multi-spill scenarios over a bounded planar workspace $W$. Static polygonal obstacles are randomly generated to impose navigation constraints. The depot location is fixed, and spill locations are sampled uniformly at random within the free space of $W$. Each spill is assigned a risk weight $R_i$ drawn independently from a predefined range to represent heterogeneous environmental severity.
Transit distances $d(S_i,S_j)$ are computed using occupancy-grid discretization of $W$ followed by obstacle-avoiding $A^*$ search between spill representatives (Sec.~\ref{Motion_Based_GR}), yielding a DMR instance.

We evaluate two spill instances, with 25, 50, and 100 spills, and vary the number of ASV duos from one to ten. These scales reflect and exceed realistic large-scale spill fragmentation observed in major incidents~\cite{Sun2016OilSlickMorphology}. For each configuration, experiments are run with a 300-second time limit on a laptop equipped with an Intel Core Ultra~9 185H CPU and 64\,GB RAM. As a MILP solver, we rely on Gurobi \cite{gurobi}, where configuration parameters are left at default settings.

\begin{figure}
    \centering
    \vspace{5pt}
    
    % First plot: 25 Spills
    \includegraphics[width=0.95\linewidth]{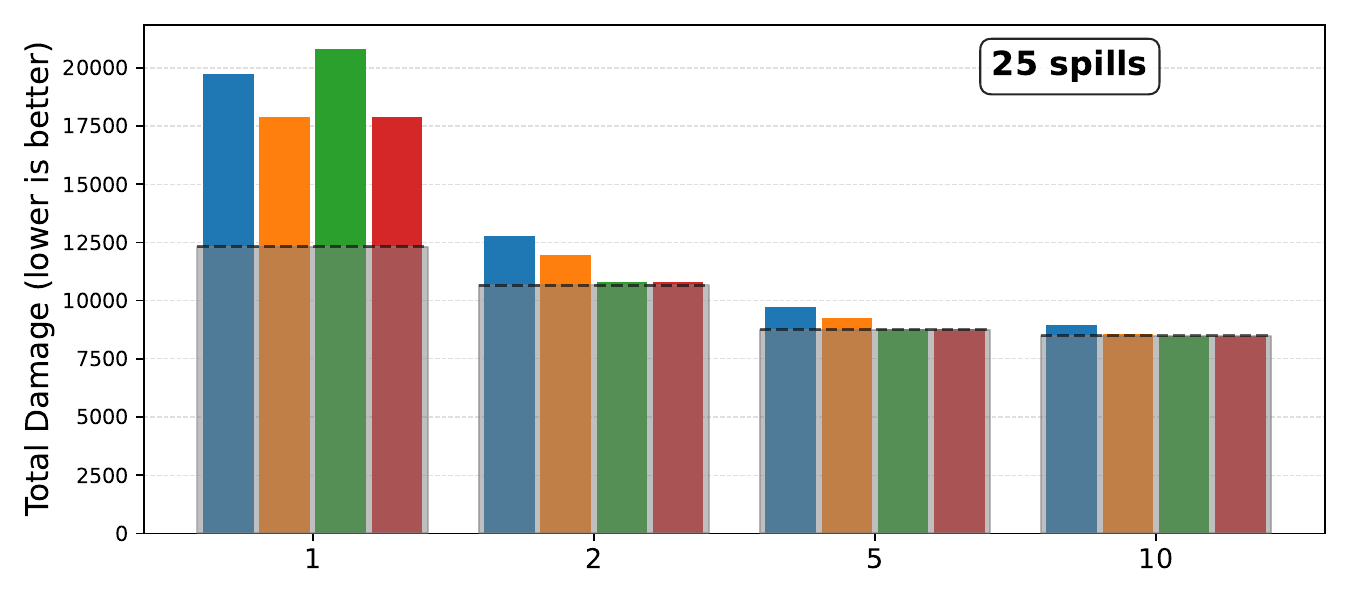}
    % \vspace{0.2cm} % Adjust vertical spacing between figures
    
    % Second plot: 50 Spills
    \includegraphics[width=0.95\linewidth]{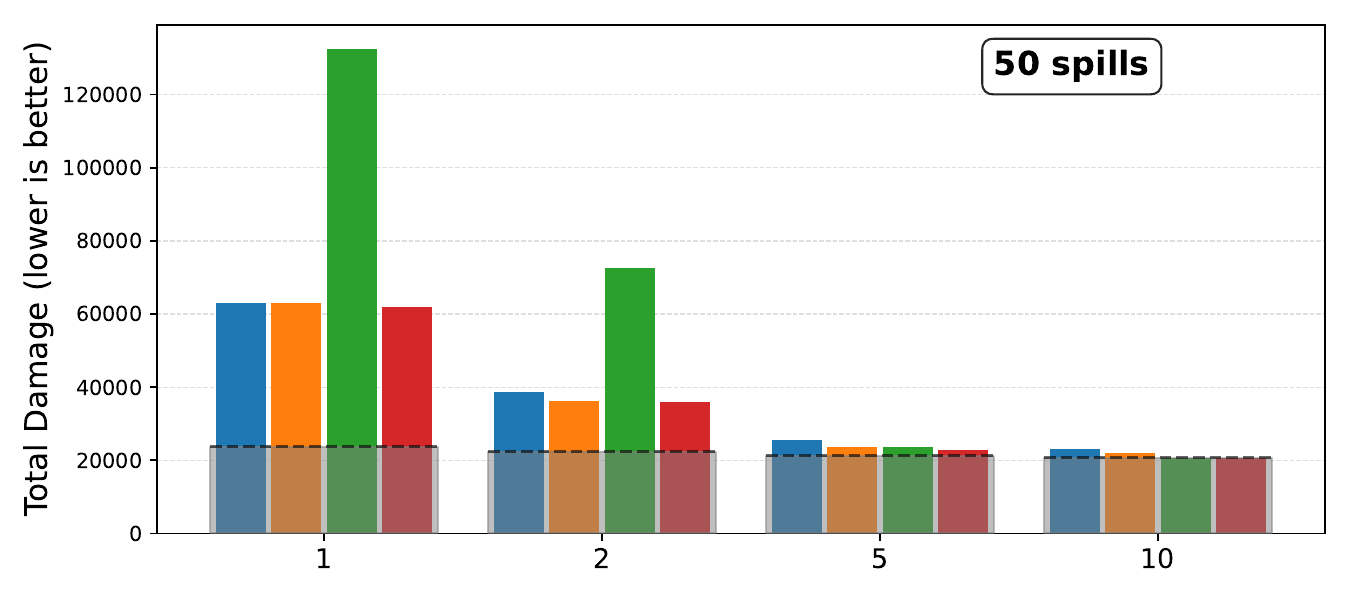}
    % \vspace{0.1cm}

    \includegraphics[width=0.95\linewidth]{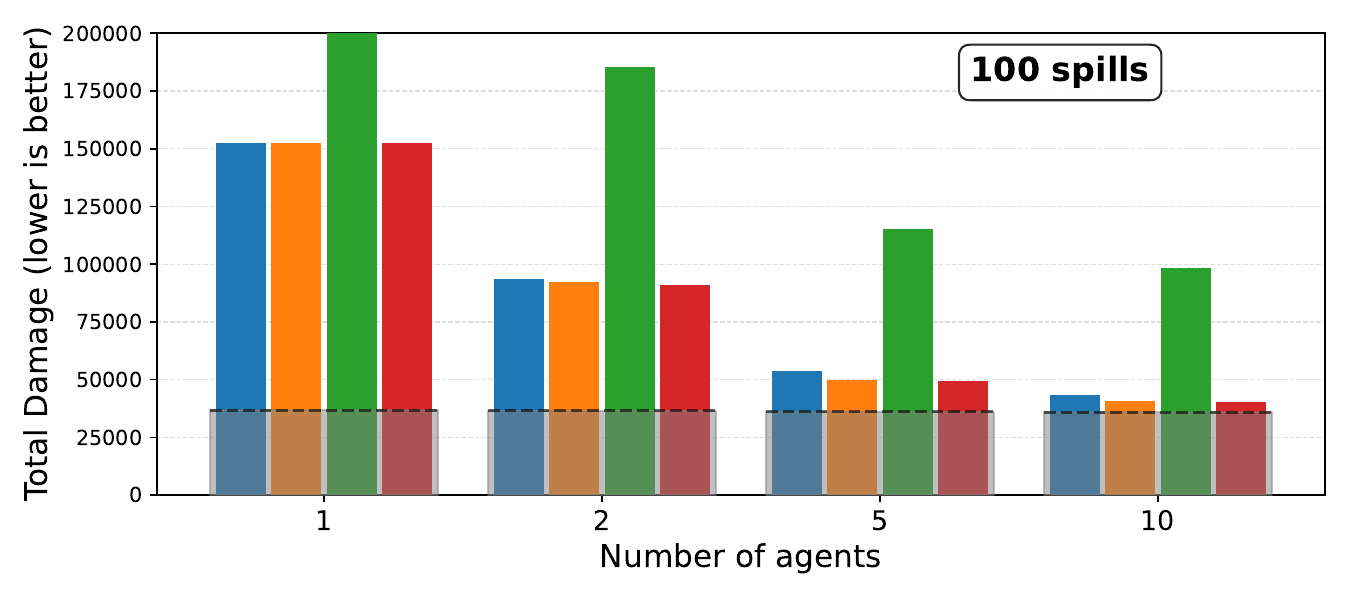}
    % \vspace{0.1cm}
    
    % Third plot: Legend
    \includegraphics[width=0.95\linewidth]{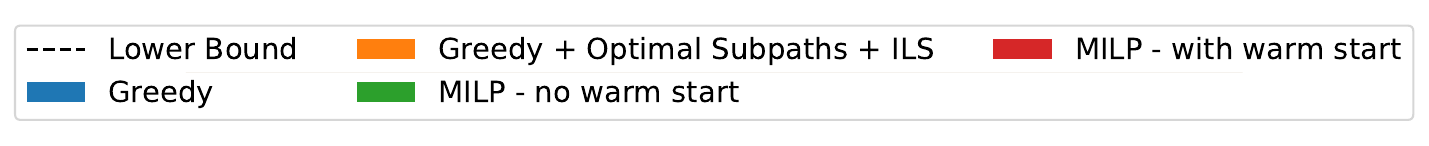}
    
    \caption{
    Final objective values for 25, 50, and 100 spills instances across varying numbers of agents. 
    Shaded gray regions indicate lower bounds. % obtained from the MILP branch-and-bound (BnB) solver within the 300-second time limit. 
    }
    \label{fig:route-eval}
\end{figure}

\niceparagraph{Solver Comparison.} Results for a representative set of scenarios are given in Fig.~\ref{fig:route-eval}. We evaluate four solution approaches: (i) \textit{Greedy} corresponds to our assignment-only heuristic (H1) without solving a MILP; (ii) \textit{Greedy + DP + ILS} denotes the full heuristic (Sec.~\ref{sec:method-heuristic}), again without invoking the MILP solver afterwards; 
(iii) \emph{MILP} denotes a pure MILP solution without using any warm-start heuristic; 
(iv) finally, \textit{MILP + warm-start} initializes the MILP solver with the full heuristic solution (H1-3). Lower bounds correspond to the best dual bounds obtained by the MILP (iv) branch-and-bound solver within the 300-second time limit.

For moderate agent counts (5 and 10 agents), the \textit{MILP + warm-start} solver reaches optimal solutions that match the computed lower bounds, or near optimality for 100 spills. Interestingly, the combinatorial difficulty decreases as the number of agents increases. %When the number of agents approaches the number of spills, routing becomes nearly trivial, as each agent services few spills. 
Conversely, for small agent counts, each agent must service more spills, increasing both assignment complexity and per-agent routing complexity.

For 25-spill instances, (near-)optimal solutions are obtained for two agents. For the 25-spill single-agent case, the DP solver (H2) recovers the optimal solution directly within the allotted time, while the MILP lower bound did not fully converge.
For more challenging configurations (e.g., 50 or 100 spills with 1–2 agents), the benefit of the warm-start heuristic becomes pronounced.  Initializing MILP with the DP-augmented heuristic significantly improves solution quality within the time limit compared to MILP alone, with accumulated damage of routing without warm start exceeding twice the damage with warm-start in the worst case. Moreover, the standalone heuristic consistently produces high-quality solutions, often within a minute of computation. 

Our greedy priority queue heuristic (H1),  consistently provides high-quality solutions in very short runtimes. The complete heuristic, applying single-agent DP and ILS refinement, can improve solution quality by up to $15\%$ with negligible additional runtime. The MILP solver, guided by this warm start, %usually provides only mild improvements over the heuristic solution within time limits, 
yet guarantees a small optimality gap in limited time, and in some cases improves the standalone heuristic solution by over $10\%$.
Overall, this demonstrates that we can effectively deal with realistic problem instances.

% ------------ 4. Simulations & Results --------------------------------
% \section{Simulations and Results}
% \input{Simulation_Results/Allocation_Results}

\subsection{Controller Evaluation}
\label{sec:control-results}
We evaluate the \pid and \fbl controllers (Sec. ~\ref{\iftoggle{arxiv}{sec:Controller-Definition}{sec:feedback}}) for path tracking under varying conditions. Across the tested scenarios, both controllers produce stable tracking along the planned path---indicating that the \fbl design, while requiring only half of the tuning parameters, captures the benefits of \pid in this setup. That said, the \pid consistently had an edge: it typically showed slightly lower steady-state errors. \fbl matches \pid’s behavior well but pays a modest performance penalty.

\begin{figure}[t]
    \centering
    \vspace{5pt}
    \includegraphics[width=0.45\textwidth]{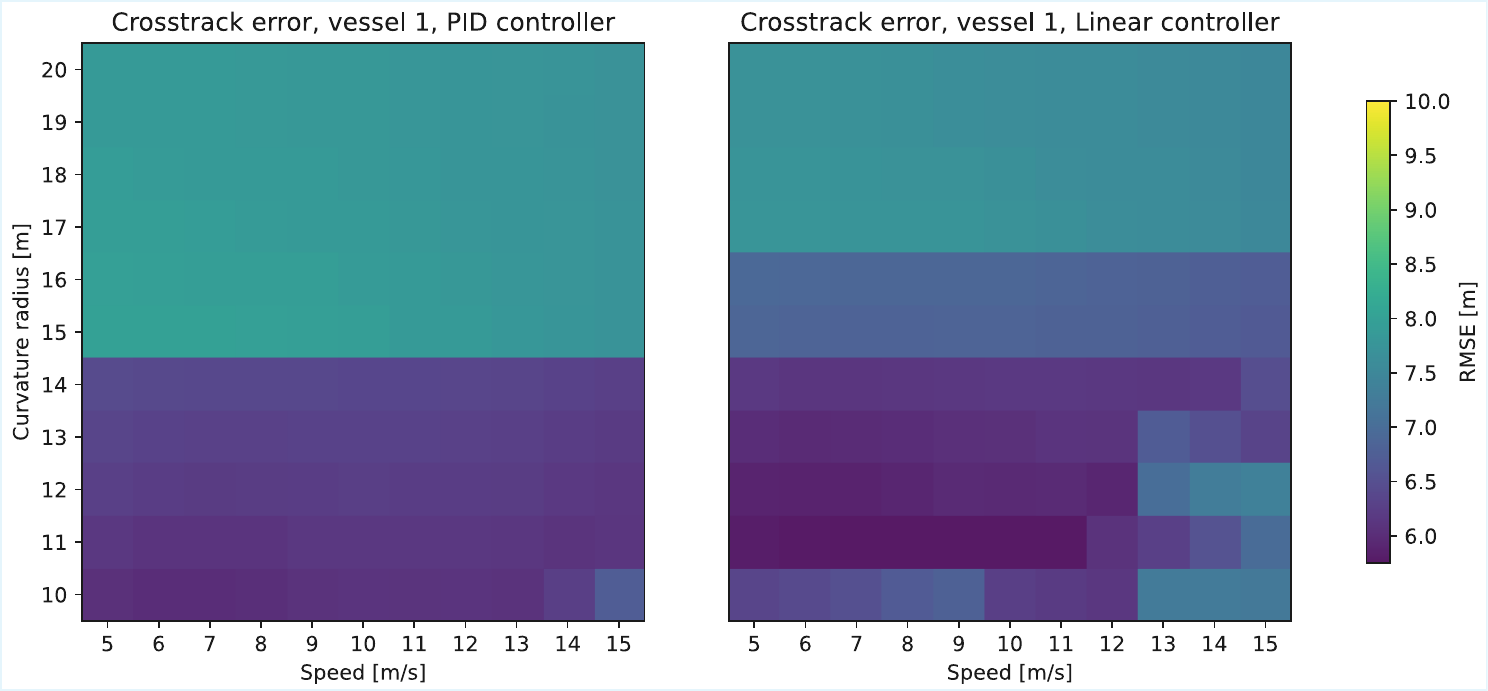}
    \includegraphics[width=0.45\textwidth]{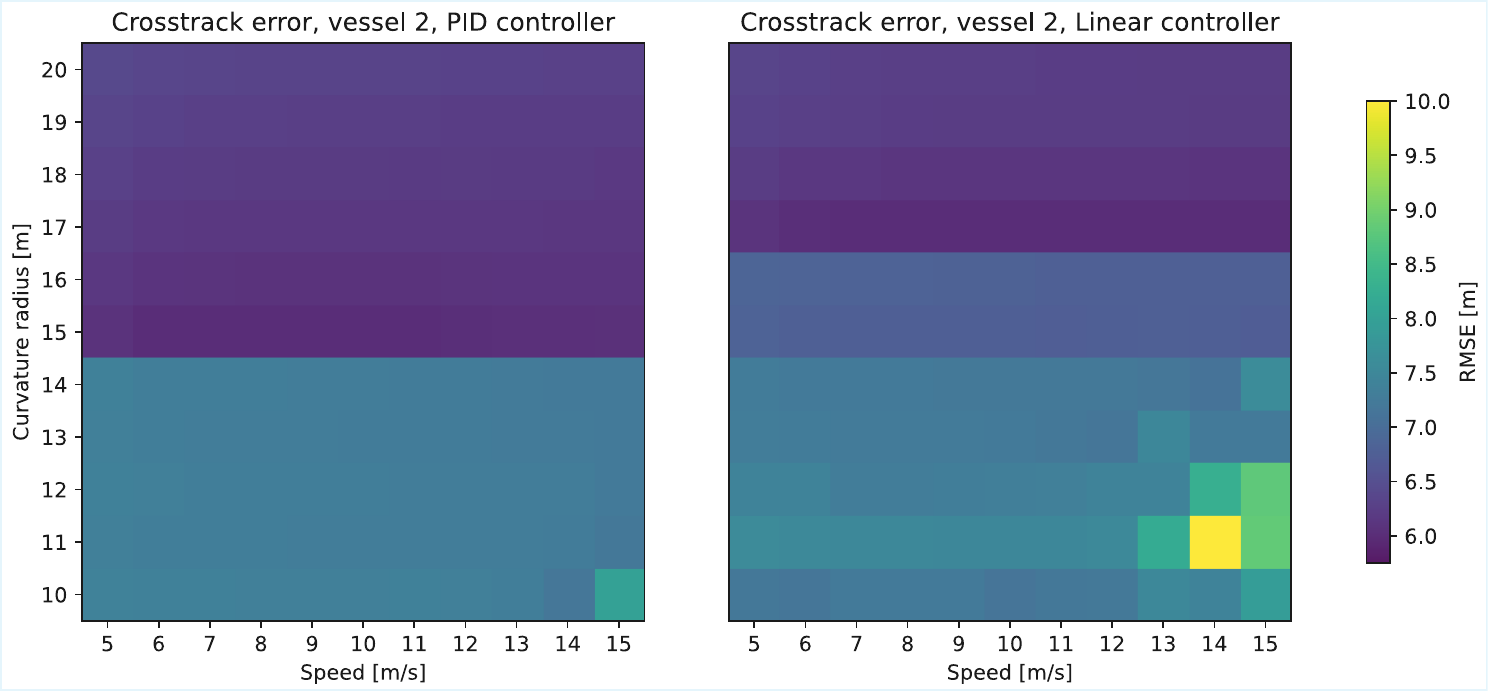}

    \includegraphics[width=0.45\textwidth]{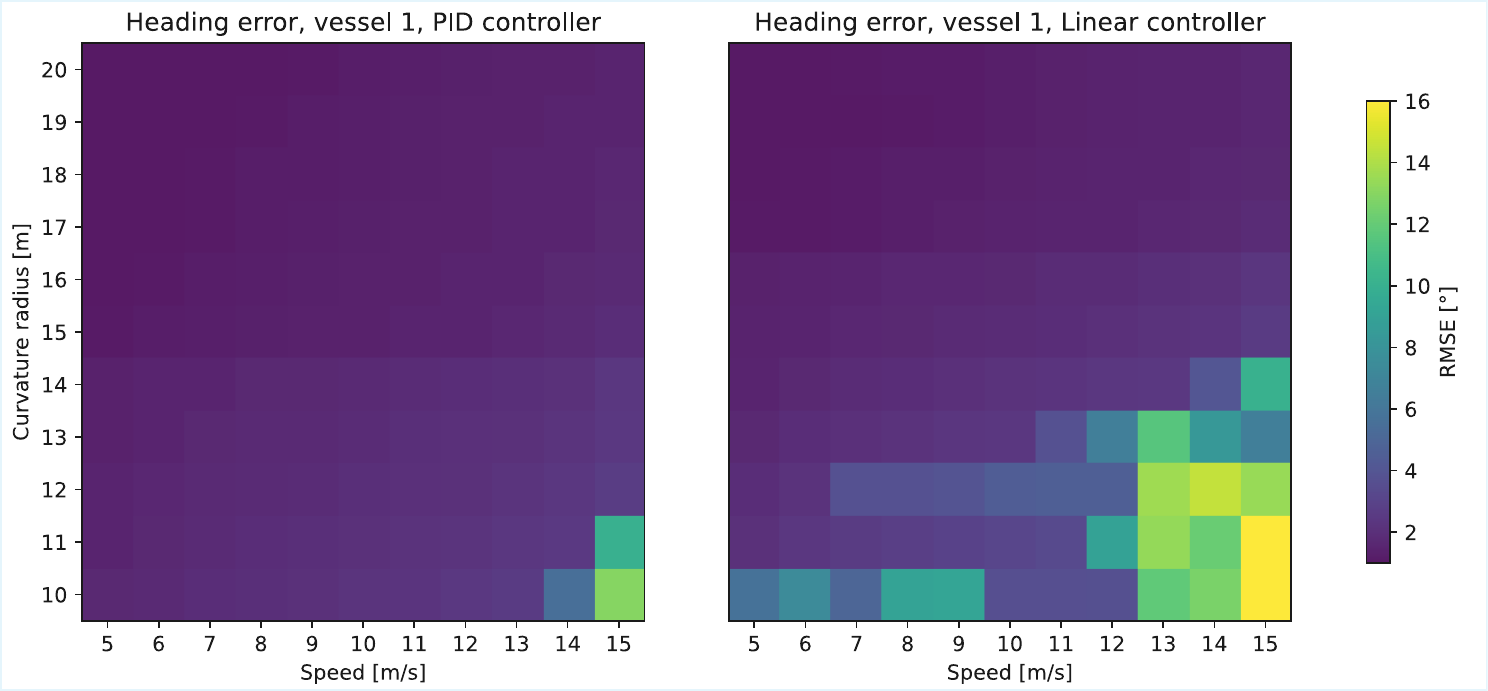}
    \includegraphics[width=0.45\textwidth]{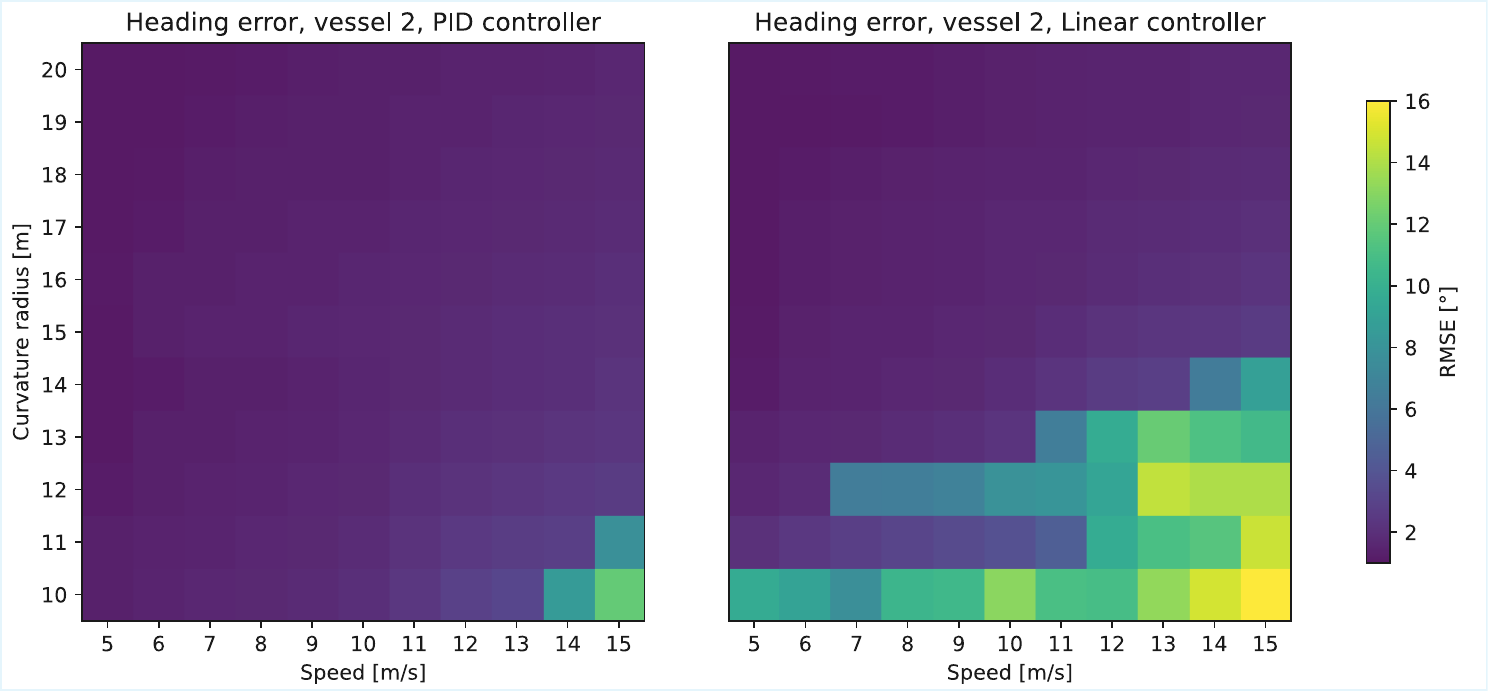}
    \caption{RMSE maps for cross-track (top) and heading errors (bottom) as functions of curvature radius and velocity for both vessels using feedback linearization control.}
    \label{fig:control-rmse-results}
\end{figure}

\niceparagraph{Controller setup.}
We tuned the \pid gains using an iterative trial-and-error procedure. Starting from a baseline (low $K_p$ with $K_i=K_d=0$), we increased $K_p$ until the response became sufficiently fast without sustaining
oscillations. Next, we introduced $K_i$ to eliminate steady-state error, and finally added a small $K_d$
(with derivative filtering for causality) to reduce overshoot. We stopped tuning once further adjustments
produced only marginal improvements\iftoggle{arxiv}{ in rise/settling time, overshoot, and steady-state error across the tested scenarios, while maintaining stable and consistent behavior.

}{. }
For the \fbl controller, the parameters $\beta_u$ and $\beta_{\omega}$ are selected to achieve a desired phase margin (through an explicit relation~\cite{ModernControl}) to improve stability. The crossover frequencies $\Omega_{c_u}$ and $\Omega_{c_{\omega}}$ are chosen (through trial and error) to decrease the settling time of the transient response. \iftoggle{arxiv}{ Specifically, the parameter $\beta_\omega$ is chosen to  satisfy a phase margin of $\phi_{lead_{\omega}}=60 ^\circ$. In a stable plant of a double integrator (Eq.~\eqref{eq:yaw_plant}), according to the Nyquist stability criterion ~\cite{ModernControl}, this is accomplished by the relation
\begin{equation}\label{eq:beta_phase}
\beta_\omega = \frac{1+\sin (\phi_{lead_{\omega}})}{1-\sin(\phi_{lead_{\omega}})}.
\end{equation}
For a plant of a single integrator (Eq.~\eqref{eq:surge_plant}), according to the Nyquist stability criterion, we already have a phase margin of $90^\circ$, thus we choose a phase margin of $\phi_{lead_{u}}=20^\circ$,  to slightly reduce overshoot. The value $\beta_u$ is derived similarly to Eq.~\eqref{eq:beta_phase}.}

\niceparagraph{Simulation model and parameters.}
An evaluation was conducted in simulation using the coupled vessel-boom model in Eq.~\eqref{eq:disturbance-hull}, where the boom enters as a measurable tow-point load $\textbf{f}_l$. The boom is simulated as an articulated chain of $n=40$ rigid links, with total length $L=40[m]$. The vessel parameters were selected to approximate a small ASV and were assumed to be identical for both vessels. \iftoggle{arxiv}{ 

For ease of implementation, the boom is initialized as a straight segment of length $L$, between the two boats. To allow the boom to actually enclose and retain oil, the two towing vessels are commanded to follow \textit{offset} paths whose separation is strictly smaller than $L$ (see Fig. ~\ref{fig:setpoints-trajectory}). If the inter-vessel distance were kept equal to the boom length, the boom would remain perpendicular to the direction of motion, effectively reducing to a taut straight line with negligible lateral containment; moreover, the resulting drag loads would be excessive and could practically damage the actuators.
Specifically, the mass and yaw inertia were set to $m = 600\,[kg]$ and $I = 500\,[kg\,m^2]$, 
the propulsor offset (lever arm) to $r = 2\, [m]$, and the drag coefficients to
$\kappa_\ell = 100 \,[\frac{N \cdot s^2}{m}]$, $\kappa_t = 10{,}000  [\frac{N \cdot s^2}{m}]$, and $\kappa_r = 1{,}000 \,[\frac{Nm\cdot s^2}{rad}]$.  %We have 40 links in each boom, of length $1 \, [m]$, each.

We assume calm-water conditions to isolate the tracking performance attributable to the controllers themselves.  The model neglects heave/roll/pitch, added-mass and wave-radiation effects, and spatially varying currents ~\cite{Fossen2011}; Thus, the reported results should be interpreted as a baseline comparison under a benign sea state, while incorporating wind/current/wave disturbances is left for future work.
}
{Full details on vessel parameters will be given in our open-source repository (upon acceptance). }

\niceparagraph{Test scenario.}
Each scenario consists of tracking a smooth reference path for an ASV duo.
Both vessels start at the same initial pose and orientation, separated by a fixed distance. %consistent with the boom length (i.e., initial separation $L$), 
They are commanded to follow the same path with synchronized setpoints (see Sec.~\ref{sec:SetpointsTracking}). The generated reference paths are Dubins paths~\cite{Dubin} between $(x_0,y_0,\theta_0)=(0,0,0)$ and $(x_f,y_f,\theta_f)=(100,65,\pi)$, as in Fig. ~\ref{fig:setpoints-trajectory}. We also sweep the curvature radius
$\rho \in [10,20] [m]$ and reference speed $v_{\mathrm{ref}} \in [5,15] [\frac{m}{s}]$ to span from mild to aggressive maneuvers.
The chosen initial and end points produce a trajectory long enough to exhibit both transient and steady-state behavior while remaining representative of local maneuvers used during the approach and encirclement of a spill.
For the results, vessel 1 is the left vessel and vessel 2 is the right vessel.

%\kiril{add a small visualization of one such example}

% Dubins path planning with varying maximum curvature constraints, while the commanded velocities were systematically varied across different simulation runs.

% \subsubsection{Simulation Setup}
% The simulation study examines the controller's performance across a parameter space defined by:
% \begin{itemize}
%     \item \textbf{Curvature radius}: $\rho \in [10, 20]$ meters, controlling the sharpness of turns in the Dubins path
%     \item \textbf{Reference velocity}: $v_{ref} \in [5, 15]$ m/s, representing different operational speeds
%     \item \textbf{Performance metrics}: 
%     \begin{itemize}
%         \item Cross-track error ($e_{cross}$): perpendicular distance from the vessel's actual position to the reference path.
%         \item Heading error ($e_{\theta}$): angular deviation between actual and reference vessel orientation.
%     \end{itemize}
% \end{itemize}

\niceparagraph{Results.}
For each ($\rho, \,v_{ref})$ pair,   Fig.~\ref{fig:control-rmse-results} reports the trajectory RMSE as our tracking metric. 

For cross track error (vessel~1), both controllers degrade with sharper turns (smaller $\rho$) and higher speeds.
Performance is mainly curvature-driven.
\iftoggle{arxiv}{For \fbl, this is apparent when the velocities are $12-15$ m/s and the curvature radius is $10-13$ m, and for \pid with velocities $13-14$ m/s and curvature $10$ m.}{}
Apart from the simulation where the turns are the sharpest and the velocities are high, \pid outperforms \fbl in curvature radii of up to $14$ m\iftoggle{arxiv}{ (error of up to $\approx 6.5$m vs an error of up to $\approx 7$m)}{}.
In the range of $15-16$m, \fbl outperforms \pid\iftoggle{arxiv}{  (error of $\approx 7 m$ vs $\approx 7.8 m$)}{}.
From curvature radii of $17$m, both controllers perform similarly\iftoggle{arxiv}{ with an error of $\approx 7.75 m$}{}. Similar trends are observed for vessel 2, although with higher error values due to a longer reference path.

For heading error, both controllers worsen at high speed and tight curvature. For \fbl, the "rough dynamics" region is substantially larger for both vessels\iftoggle{arxiv}{ ($\approx 20$ experiments)}{}, with high errors of up to $\approx 15 ^\circ$, while for \pid, this region is much smaller\iftoggle{arxiv}{ (3 experiments)}{} with high errors of up to $\approx 12 ^\circ$.
When the dynamics are less demanding, \fbl achieves heading errors of $\approx 5 ^\circ$ for both vessels, while \pid achieves heading errors of $\approx 3 ^\circ$ for both vessels.

% Overall, both controllers exhibit increased cross-track and heading errors with sharper turns and higher velocities, indicating the challenges of maintaining accurate path tracking under demanding path dynamics.
% Nevertheless, these errors are relatively negligible relative to the spatial scale of oil slicks and containment operations. % no significant downside given the  distances at sea and oil slick .

Overall, both controllers incur larger cross-track and heading errors at higher speeds and tighter turns, reflecting the increased difficulty of tracking aggressive trajectories. These errors remain small relative to the spatial scale of typical oil slicks and containment operations.

% From a theoretical perspective, \fbl ensures zero steady-state error, while \pid controller lacks such guarantees. In practice, both controllers completed their simulations, indicating they handled disturbances adequately. Overall, despite the theoretical benefits of \fbl, \pid achieved better results across all simulations. One clear practical disadvantage of the \pid approach is the large number of parameters (eight in total per vessel), which makes tuning particularly challenging. 
% In contrast, \fbl requires tuning only four parameters, which can be systematically achieved using well-established linear-control techniques\iftoggle{arxiv}{(lead controllers and frequency-response based tuning \cite{ModernControl})}{}. 

Theoretically, \fbl guarantees zero steady-state error, unlike \pid. In simulations, both tolerated different dynamics and completed all runs, but \pid consistently achieved slightly better tracking. Practically, \pid is harder to tune (eight gains per vessel), whereas \fbl requires only four parameters. % that can be tuned systematically via standard frequency-response methods\textbackslash{}iftoggle\{arxiv\}{ \textbackslash{}cite\{ModernControl\}}\{}. 

\section{Conclusion and Future Work}
In this work, we initiated the study of the problem of multiple oil-spill cleanup using boom-towing ASV duos. Our contribution lies in a careful modeling of the problem, and developing effective solution approaches for the corresponding routing and path tracking problems. 

In the future, we plan to validate the framework on real oil-spill data sets and evaluate our controllers under varying sea conditions (e.g., winds and currents), which we have neglected so far. From an algorithmic perspective, one issue that our current modeling of the routing problem overlooks is that once a spill is cleaned, other ASVs can pass through it, rather than around it, as we currently do. This might require a significant overhaul of our MILP approach, which we hope to explore in the future.   

\niceparagraph{Acknowledgments.} 
The AI system ChatGPT was used for light editing and grammar enhancement, as well as a preliminary literature review.

% ======================================================================
%  References
% ======================================================================
\bibliographystyle{IEEEtran}
\bibliography{references}

\iftoggle{arxiv} {
    \appendix
% \section{Normalized feedback implementation of a lead controller}
% \label{app:NormalizedLeadController}

\begin{figure*}[t]
  \centering
  \includegraphics[width=0.7\textwidth]{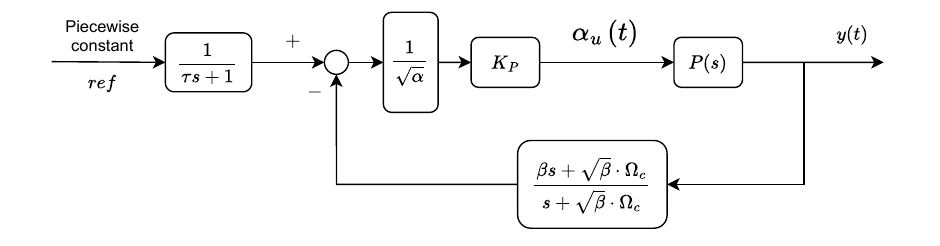}
  \caption{Lead controller in the normalized variant: the controller lies in the feedback while its static gain sits in the main branch along with the gain $K_P$. The reference signal is a piecewise constant signal, filtered with a low-pass filter to avoid discontinuous jumps in the control signal.}
  \label{fig:Lead-Controller}
\end{figure*}

\subsection{Normalized feedback implementation}
\label{app:NormalizedLeadController} 
We describe a normalized-feedback variant of \fbl.  The normalized lead controller from Eq.~\eqref{eq:LeadController}, wherein the static gain is equal to one, is placed in the feedback to reduce the overshoot of the system, while its static gain remains in the main branch. This is depicted in Fig. \ref{fig:Lead-Controller} and detailed below. This modification diminishes the effect of the lead-controller's zero in the complementary sensitivity function. 
Furthermore, we apply a first-order low-pass filter to the reference signal to avoid discontinuous jumps in the control signals applied to the vessel. 
This is especially important when switching the reference from zero (when waiting for the other vessel to reach its setpoint) to a non-zero constant value (when both vessels can move again).
%This filtering of the reference signal is also depicted in Fig. ~\ref{fig:Lead-Controller}.

Next, we explain how the revised normalized-feedback variant is derived, and show that, when coupled with the plants in Eq.~\eqref{eq:surge_plant} and Eq.~\eqref{eq:yaw_plant}, the closed-loop system is stable, while improving the the transient response. 
%as explained in App. ~\ref{app:NormalizedFeedbackTF}.
%
% \subsection{Normalized feedback analysis in terms of the closed-loop transfer functions (surge and yaw)}
% \label{app:NormalizedFeedbackTF}
% Our implementation of the lead controller (App. ~\ref{app:feedbackImplementation}) preserves the
% closed-loop stability proved earlier, while improving the transient response. 

To make this more explicit, we compare the complementary sensitivity function (CSF) for two architectures: (i) the standard series lead controller with unity feedback, and (ii) the normalized lead placed in the feedback branch, with the static gain kept in the main branch (Fig.~\ref{fig:Lead-Controller}). The CSF maps the reference input $\mathrm{Ref}(t)$ to the closed-loop output $y(t)$.
In the Laplace domain, with $Y(s)=\mathcal{L}\{y(t)\}$ and $\mathrm{Ref}(s)=\mathcal{L}\{\mathrm{Ref}(t)\}$,
the closed-loop transfer function is $T(s)=\frac{Y(s)}{\mathrm{Ref}(s)}$. 

The CSF $T(s)$ provides direct insight into the closed-loop \emph{static tracking gain} ($T(0)$), which indicates the ratio between a steady-state output and a constant reference input, and the effective \emph{bandwidth} (how fast the system can track changes in the reference)~\cite{ModernControl}. By analyzing its structure for both architectures, we can show that the stability guarantees (Claim~\ref{clm:uy_exp}) extend to the normalized setting.

For each architecture (i) and (ii), we first establish stability, which is a prerequisite for discussing static gain and transient response.
We keep the notation from the stability analysis.
For surge we use $(\beta_u,\Omega_{c_u},\gamma_u)$ with
$K_{p_u}=\frac{\Omega_{c_u}}{\gamma_u}$.
For yaw we use $(\beta_\omega,\Omega_{c_\omega},\gamma_\omega)$
with $K_{p_\omega}=\frac{\Omega_{c_\omega}^2}{\gamma_\omega}$.
In the normalized topology, the main-branch gain is
$\frac{K_p}{\sqrt{\beta}}$, and the normalized lead  \[
H(s) = \frac{\beta s + \sqrt{\beta} \Omega_c}{s + \sqrt{\beta} \Omega_c}
\]
is placed in the feedback path. Next, we consider the individual surge and yaw dynamics (respectively). 

\niceparagraph{Surge dynamics (single integrator plant).}
Consider the surge plant
\[
P_u(s) = \frac{\gamma_u}{s}.
\]
%
%\textbf{1) Standard series lead.}
With the standard series lead
\[
C_u(s)=K_{p_u}\,
\frac{\sqrt{\beta_u}s+\Omega_{c_u}}
{s+\sqrt{\beta_u}\Omega_{c_u}},
\]
the complementary sensitivity becomes
\[
T^{\mathrm{std}}_u(s)
=
\frac{\Omega_{c_u}(\sqrt{\beta_u}s+\Omega_{c_u})}
{\bm{s^2+2\sqrt{\beta_u}\Omega_{c_u}s+\Omega_{c_u}^2}}.
\]
%\textbf{2) Normalized lead in feedback.}
With the normalized lead in feedback
\[
C^{\mathrm{main}}_u(s)=\frac{K_{p_u}}{\sqrt{\beta_u}},
\qquad
H_u(s)=
\frac{\beta_u s+\sqrt{\beta_u}\Omega_{c_u}}
{s+\sqrt{\beta_u}\Omega_{c_u}},
\]
we obtain
\[
T^{\mathrm{norm}}_u(s)
=
\frac{\tfrac{\Omega_{c_u}}{\sqrt{\beta_u}}s+\Omega_{c_u}^2}
{\bm{s^2+2\sqrt{\beta_u}\Omega_{c_u}s+\Omega_{c_u}^2}}.
\]

%\textbf{Interpretation.}
Both complementary sensitivity configurations share the  characteristic
polynomial
\[
s^2+2\sqrt{\beta_u}\Omega_{c_u}s+\Omega_{c_u}^2,
\]
hence the same stability condition (Claim~\ref{clm:uy_exp}).
Moreover,
\[
T^{\mathrm{std}}_u(0)
=
T^{\mathrm{norm}}_u(0)
=
1,
\]
so constant references are tracked without steady-state error.
The only structural difference is the zero location:
\[
z_u^{\mathrm{std}}
=
-\frac{\Omega_{c_u}}{\sqrt{\beta_u}},
\qquad
z_u^{\mathrm{norm}}
=
-\sqrt{\beta_u}\Omega_{c_u}.
\]
Since $\beta_u>1$, the normalized zero lies further left in the
complex plane. Consequently, its influence on the dominant
closed-loop poles is reduced, yielding smaller overshoot and
less transient peaking.

\niceparagraph{Yaw dynamics (double integrator plant).}
Consider the yaw plant
\[
P_\omega(s) = \frac{\gamma_\omega}{s^2}.
\]
%
%\textbf{1) Standard series lead.}
With the standard series lead
\[
C_\omega(s)=K_{p_\omega}
\frac{\sqrt{\beta_\omega}s+\Omega_{c_\omega}}
{s+\sqrt{\beta_\omega}\Omega_{c_\omega}},
\]
the complementary sensitivity is
\[
T^{\mathrm{std}}_\omega(s)
=
\frac{\Omega_{c_\omega}^2(\sqrt{\beta_\omega}s+\Omega_{c_\omega})}
{\bm{s^3+\sqrt{\beta_\omega}\Omega_{c_\omega}s^2
+\sqrt{\beta_\omega}\Omega_{c_\omega}^2 s
+\Omega_{c_\omega}^3}}.
\]
%
%\textbf{2) Normalized lead in feedback.}
With the normalized lead in feedback
\[
C^{\mathrm{main}}_\omega(s)
=
\frac{K_{p_\omega}}{\sqrt{\beta_\omega}},
\qquad
H_\omega(s)
=
\frac{\beta_\omega s+\sqrt{\beta_\omega}\Omega_{c_\omega}}
{s+\sqrt{\beta_\omega}\Omega_{c_\omega}},
\]
we obtain
\[
T^{\mathrm{norm}}_\omega(s)
=
\frac{\tfrac{\Omega_{c_\omega}^2}{\sqrt{\beta_\omega}}s
+\Omega_{c_\omega}^3}
{\bm{s^3+\sqrt{\beta_\omega}\Omega_{c_\omega}s^2
+\sqrt{\beta_\omega}\Omega_{c_\omega}^2 s
+\Omega_{c_\omega}^3}}.
\]

%\textbf{Interpretation.}
Here, again, both realizations share the same characteristic
denominator and therefore the same stability condition
(Claim~\ref{clm:uy_exp}), as well as unity DC gain:
\[
T_\omega(0)=1.
\]
The difference lies exclusively in the zero:
\[
z_\omega^{\mathrm{std}}
=
-\frac{\Omega_{c_\omega}}{\sqrt{\beta_\omega}},
\qquad
z_\omega^{\mathrm{norm}}
=
-\sqrt{\beta_\omega}\Omega_{c_\omega}.
\]
As in the surge case, the normalized implementation pushes the
zero further left. This attenuates overshoot and reduces
high-frequency amplification without altering the stability
proofs of Sec.~\ref{Surge-Yaw-Proof}.

This proves that the normalized feedback realization preserves the closed-loop
characteristic polynomial and unity steady-state gain, while
repositioning the controller zero deeper in the left half-plane.
The result is improved transient behavior (reduced overshoot
and peaking) without modifying the stability guarantees.    
}

\end{document}

%%% Local Variables:
%%% mode: latex
%%% TeX-master: t
%%% End: